\documentclass[sigplan,nonacm]{acmart}
\usepackage{amsmath}
\usepackage{stmaryrd}                     
\usepackage{braket}
\usepackage{multirow}
\usepackage{enumitem}
\usepackage{placeins}
\usepackage{tikz}
\usetikzlibrary{arrows.meta,positioning,shapes.misc,calc,fit,backgrounds}
\usepackage[EULERGREEK]{sansmath}
\SetMathAlphabet{\mathrm}{sans}{OT1}{phv}{m}{n}
\SetMathAlphabet{\mathsfsl}{sans}{OT1}{phv}{m}{sl}
\SetMathAlphabet{\mathsfbf}{sans}{OT1}{phv}{b}{n}
\SetSymbolFont{operators}{sans}{OT1}{phv}{m}{n}
\SetSymbolFont{symbols}{sans}{LMS}{ntxsy}{m}{n}
\SetSymbolFont{letters}{sans}{OML}{nxlmi}{m}{it}
\definecolor{fsCircuitBlue}{HTML}{315A77}
\definecolor{fsCircuitRust}{HTML}{A34F32}
\definecolor{fsCircuitInk}{HTML}{24282B}
\newcommand{\fscircuitstyle}{%
  \fontfamily{phv}\selectfont\sansmath
  \colorlet{fsBlue}{fsCircuitBlue}%
  \colorlet{fsRed}{fsCircuitRust}%
  \tikzset{every node/.append style={text=fsCircuitInk}}}
\definecolor{fsBlue}{HTML}{2879B9}
\definecolor{fsRed}{HTML}{C84843}
\definecolor{fsGreen}{HTML}{44876B}
\definecolor{fsGray}{HTML}{77828B}
\tikzset{fs flow/.style={-{Latex[length=2mm]},semithick},
  fs faint/.style={draw=fsGray!45,line width=.35pt}}
\newcommand{\fscnot}[4]{%
  \draw[#4,line width=.65pt] ({#1},{#2}) -- ({#1},{#3});
  \fill[#4] ({#1},{#3}) circle (1.65pt);
  \draw[#4,fill=white,line width=.65pt] ({#1},{#2}) circle (2.65pt);
  \draw[#4,line width=.65pt] ({#1-.093},{#2}) -- ({#1+.093},{#2});
  \draw[#4,line width=.65pt] ({#1},{#2-.093}) -- ({#1},{#2+.093});}

\newcommand{\AS}{AlphaSyndrome}
\newcommand{\PH}{PropHunt}
\newcommand{\code}[3]{\ensuremath{\llbracket #1,#2,#3\rrbracket}}
\newcommand{\LER}{\ensuremath{\mathrm{LER}}}
\newcommand{\ESS}{\ensuremath{\mathrm{ESS}}}
\newcommand{\Nmax}{\ensuremath{N_{\max}}}
\newcommand{\sysname}{FastSched}

\newcommand{\meanRed}{25.9}
\newcommand{\peakRed}{49.4}

\begin{document}

\title{Reinforcement Learning for Syndrome Extraction}

\author{John Zhuoyang Ye}
\affiliation{%
  \institution{University of California, Los Angeles}
  \city{Los Angeles}
  \country{United States}
}
\email{yezhuoyang@cs.ucla.edu}

\author{Aarav Pabla}
\affiliation{%
  \institution{University of California, Los Angeles}
  \city{Los Angeles}
  \country{United States}
}
\email{apabla@ucla.edu}

\author{Jens Palsberg}
\affiliation{%
  \institution{University of California, Los Angeles}
  \city{Los Angeles}
  \country{United States}
}
\email{palsberg@cs.ucla.edu}

\renewcommand{\shortauthors}{Ye, Pabla, and Palsberg}

\begin{CCSXML}
<ccs2012>
   <concept>
       <concept_id>10010520.10010521.10010542.10010550</concept_id>
       <concept_desc>Computer systems organization~Quantum computing</concept_desc>
       <concept_significance>500</concept_significance>
       </concept>
   <concept>
       <concept_id>10010583.10010786.10010813.10011726.10011728</concept_id>
       <concept_desc>Hardware~Quantum error correction and fault tolerance</concept_desc>
       <concept_significance>500</concept_significance>
       </concept>
   <concept>
       <concept_id>10010147.10010257.10010258.10010261</concept_id>
       <concept_desc>Computing methodologies~Reinforcement learning</concept_desc>
       <concept_significance>300</concept_significance>
       </concept>
 </ccs2012>
\end{CCSXML}

\makeatletter
\patchcmd{\ccsdesc@parse}{\else; }{\else, }{}{\PackageError{main}{CCS punctuation patch failed}{Check the acmart definition.}}
\makeatother

\ccsdesc[500]{Computer systems organization~Quantum computing}
\ccsdesc[500]{Hardware~Quantum error correction and fault tolerance}
\ccsdesc[300]{Computing methodologies~Reinforcement learning}

\keywords{Quantum Error Correction, Syndrome Measurement,
Reinforcement Learning, Importance Sampling}

\begin{abstract}

A key subtask of quantum error correction is to extract a syndrome that, if nontrivial, signals an error.
The number of possible ways to extract a syndrome grows exponentially with the syndrome size, and these implementations vary greatly in fault tolerance, as measured by their logical error rates.
This creates a natural search problem: find an implementation with a low logical error rate.
Previous work solves this problem but sacrifices either solution quality or scalability.
In this paper, we use reinforcement learning and importance sampling to outperform previous work at all scales.
Compared with the state of the art automatic scheduling tools \AS{} and \PH{}, our tool reduces the logical error rate by \meanRed\% and 71.7\% on average, respectively, culminating with a reduction of 97.8\% for a surface code with distance 15.

\end{abstract}

\maketitle

\section{Introduction}\label{sec:intro}

\paragraph{Syndrome Extraction.}

Quantum computations that factor integers and simulate molecules require reliable logical qubits~\cite{Gidney2025rsa,Babbush2018spectra}.
Quantum error correction (QEC) provides this reliability by encoding logical information redundantly in unreliable physical qubits and repeatedly extracting syndromes to detect and correct errors~\cite{Shor1995,Terhal2015}.
The required level of reliability depends on the application and can be, for example, $10^{-15}$ errors per logical computation step~\cite[Section~3.2]{Gidney2025rsa}.
In this paper, we focus on the reliability of syndrome extraction.
Evaluating circuits with low logical error rates (LERs) requires many
samples because failures are rare. Figure~\ref{fig:intro-contours} compares
measured LERs as the surface-code distance increases.
\sysname{} finds schedules with lower LERs across the tested distances.

\begin{figure}[t]
\centering
\begingroup
\fscircuitstyle
\resizebox{\linewidth}{!}{%
\begin{tikzpicture}[x=1cm,y=1cm,font=\footnotesize,
  card/.style={draw=fsCircuitInk!55,fill=white,line width=.4pt},
  wire/.style={draw=fsCircuitInk!65,line width=.4pt}]
\path[use as bounding box] (0,.22) rectangle (8.24,-6.37);
\node[font=\footnotesize\bfseries] at (4.12,0) {One check, many CNOT orders};
\node at (4.12,-.40)
  {$w_i$: data qubits in check $i$\qquad $\pi_i$: CNOT order};

\foreach \shift/\letter/\second/\third/\order/\choice in
  {0/a/2/3/{1,2,3,4}/1,2.78/b/3/2/{1,3,2,4}/2}{
  \begin{scope}[shift={(\shift,-1.48)}]
    \node[font=\footnotesize\bfseries] at (1.29,.65)
      {(\letter) $Z_1Z_2Z_3Z_4$};
    \node at (1.29,.34) {$w_i=4$};
    \foreach \q in {1,...,4}{
      \node[anchor=east,inner sep=.5pt] at (.24,{-.29*(\q-1)}) {$q_{\q}$};
      \draw[wire] (.30,{-.29*(\q-1)}) -- (2.57,{-.29*(\q-1)});}
    \node[anchor=east,inner sep=.5pt] at (.24,-1.26) {$a$};
    \draw[wire] (.30,-1.26) -- (2.57,-1.26);
    \node[fill=white,inner sep=.4pt,font=\scriptsize] at (.43,-1.26) {$\ket{0}$};
    \node[fill=white,inner sep=.4pt,font=\scriptsize] at (2.41,-1.26) {$M_Z$};
    \fscnot{.78}{-1.26}{0}{black}
    \foreach \t/\q in {1/\second,2/\third}{
      \ifnum\q=2\relax\def\gatecolor{fsBlue}\else\def\gatecolor{fsRed}\fi
      \fscnot{.78+.40*\t}{-1.26}{-.29*(\q-1)}{\gatecolor}}
    \fscnot{1.98}{-1.26}{-.87}{black}
    \node at (1.29,-1.63) {$\pi_i^{(\choice)}=(\order)$};
  \end{scope}}
\begin{scope}[shift={(5.56,-1.48)}]
  \node[font=\footnotesize\bfseries] at (1.29,.65) {(c) $Z_1Z_2$};
  \node at (1.29,.34) {$w_j=2$};
  \foreach \q/\y in {1/-.18,2/-.67}{
    \node[anchor=east,inner sep=.5pt] at (.24,\y) {$q_{\q}$};
    \draw[wire] (.30,\y) -- (2.57,\y);}
  \node[anchor=east,inner sep=.5pt] at (.24,-1.26) {$a$};
  \draw[wire] (.30,-1.26) -- (2.57,-1.26);
  \node[fill=white,inner sep=.4pt,font=\scriptsize] at (.43,-1.26) {$\ket{0}$};
  \node[fill=white,inner sep=.4pt,font=\scriptsize] at (2.41,-1.26) {$M_Z$};
  \fscnot{1.03}{-1.26}{-.67}{fsBlue}
  \fscnot{1.77}{-1.26}{-.18}{black}
  \node at (1.29,-1.63) {$\pi_j^{(1)}=(2,1)$};
\end{scope}
\node at (2.68,-3.47) {Two of $4!=24$ orders of the same check};
\node at (6.85,-3.47) {One of $2!=2$ orders};
\draw[fsGray!35,line width=.4pt] (.12,-3.77) -- (8.12,-3.77);

\foreach \x/\i in {1.10/1,3.58/2,7.14/m}{
  \node[font=\footnotesize\bfseries] at (\x,-4.08) {Check $\i$};
  \draw[card] ({\x-.97},-4.30) rectangle ({\x+.97},-5.04);
  \node at (\x,-4.54) {$\pi_{\i}^{(1)},\ \pi_{\i}^{(2)},\ \ldots$};
  \node at (\x,-4.86) {$w_{\i}!$ orders};}
\node at (2.34,-4.67) {$\times$};
\node at (5.38,-4.67) {$\times\,\cdots\,\times$};
\node[inner xsep=10pt,inner ysep=4pt]
  at (4.12,-5.46)
  {Choose one order per check: $|\Omega|=\prod_{i=1}^{m}w_i!$};
\draw[fsGray!35,line width=.4pt] (.12,-5.88) -- (8.12,-5.88);
\node[anchor=west] at (.15,-6.19) {Rotated surface $\code{25}{1}{5}$};
\node[anchor=east,text=fsBlue] at (8.10,-6.19)
  {$(4!)^{16}(2!)^8\approx\mathbf{3.1\times10^{24}}$};
\end{tikzpicture}}
\endgroup
\caption{Syndrome extraction example.}  
\Description{The check weight w counts data qubits, and pi lists the
  qubits in CNOT time order from left to right. The first two circuits
  measure Z one Z two Z three Z four with orders one two three four and
  one three two four, illustrating two of twenty-four permutations.
  The third circuit measures a weight-two boundary check in order two one,
  one of two permutations. Each data qubit controls a CNOT targeting an
  ancilla prepared in zero and measured in Z. Choosing one permutation
  for each of m checks gives the product of their factorial counts.
  Sixteen weight-four interior checks and eight weight-two boundary
  checks of the distance-five rotated surface code give about 3.1 times
  ten to the twenty-four orders.}
\label{fig:scheduling-space}
\label{fig:hook}
\end{figure}

A syndrome is a bit pattern obtained by a sequence of checks, each of which measures a stabilizer and produces one syndrome bit. 
Each check uses an ancilla qubit to collect information from several data qubits through CNOT gates.
Figure~\ref{fig:hook} illustrates a syndrome extractor for the rotated surface code $\code{25}{1}{5}$, which encodes a single logical qubit into 25 physical qubits.  Figure~\ref{fig:hook}(a,b) shows two possible CNOT orders for the stabilizer $Z_1Z_2Z_3Z_4$.
In the absence of noise, changing the CNOT order within a check preserves its behavior.
During a noisy execution, however, an error on the ancilla can propagate through the remaining CNOTs to the data qubits, resulting in \emph{hook errors}~\cite{Tomita_2014,Beverland2024}.

A \emph{schedule} specifies the CNOT order for each check and the time step of each gate.
For $m$ checks, let $w_i$ be the number of data qubits in check $i$.
Check $i$ has $w_i!$ CNOT orders, so choosing one order per check gives $\prod_{i=1}^{m} w_i!$ combinations before imposing timing restrictions or exploiting symmetries.
The code in Figure~\ref{fig:hook} has sixteen weight-four checks and eight weight-two checks, giving $(4!)^{16}(2!)^8\approx3.1\times10^{24}$ order combinations.

The schedules differ in how they propagate errors and, consequently, in their logical error rate (LER).
For example, prior work reports up to a 96.2\% LER reduction by optimizing the schedule relative to a depth-optimal baseline~\cite[Section~5.2]{AlphaSyndrome2026}.
We therefore seek a schedule with low LER:
\begin{quotation}
{\em The schedule-synthesis problem:\\
find a schedule with a low logical error rate.}
\end{quotation}

\paragraph{State of the Art.}

\AS{}~\cite{AlphaSyndrome2026} and \PH{}~\cite{PropHunt2026} demonstrate
the potential of automatic schedule synthesis.
\AS{} uses Monte Carlo tree search (MCTS), and its schedules match Google's
hand-designed surface-code schedules. Impressively, for the \code{72}{12}{6}
bivariate bicycle code, \AS{} outperforms IBM's schedule, and for different codes, it comes with results for physical error probabilities down to $10^{-5}$ on three code instances~\cite[Section~5.6]{AlphaSyndrome2026}.
\PH{} uses maximum satisfiability (MaxSAT) to find fault patterns that a
decoder (which infers corrections from syndrome information) can confuse, and then changes the circuit to resolve those ambiguities. 
Impressively, \PH{} matches hand-designed surface-code performance and reduces LER by
$2.5$--$4\times$ relative to coloration circuits on its tested lifted-product
and random quantum Tanner codes at physical error probability
$10^{-3}$~\cite[Section~6.1]{PropHunt2026}.

Together, these approaches expose a tradeoff between solution quality and scalability: \AS{} finds high-quality schedules but scales poorly to codes with large distances, while \PH{} scales to larger codes but can produce schedules with much higher LERs.
In this paper, we address the schedule-synthesis problem without making this tradeoff: our approach combines quality with scalability.

\paragraph{Our Results.}

\begin{figure}[t]
  \centering
  \begingroup
  \begin{tikzpicture}[
    x=\linewidth,y=1cm,
    font=\sffamily\fontsize{8}{9.5}\selectfont,
    every node/.style={align=center,inner sep=0pt,outer sep=0pt},
    heading/.style={font=\sffamily\bfseries\fontsize{8}{9.5}\selectfont}
  ]
    \path[use as bounding box] (0,-.05) rectangle (1,4.29);

    \node[heading,text width=.55\linewidth] at (.721,4.09)
      {\em How to evaluate schedules};
    \node[heading,text width=.25\linewidth] at (.580,3.57)
      {Monte Carlo \\ sampling};
    \node[heading,text width=.25\linewidth] at (.858,3.57)
      {Importance \\ sampling};

    \node[heading,text width=.17\linewidth] at (.085,1.61)
      {\em How to choose\\schedules};
    \node[heading,text width=.25\linewidth] at (.310,2.68)
      {MCTS};
    \node[heading,text width=.25\linewidth] at (.310,1.61)
      {MaxSAT};
    \node[heading,text width=.25\linewidth] at (.310,.54)
      {Reinforcement \\ learning};

    \draw[line width=.75pt] (.444,0) rectangle (.997,3.21);
    \node[text width=.25\linewidth] at (.580,2.68)
      {AlphaSyndrome\\\cite{AlphaSyndrome2026}};
    \node[text width=.25\linewidth] at (.858,2.68)
      {\textemdash}; 
    \node[text width=.25\linewidth] at (.580,1.61)
      {PropHunt~\cite{PropHunt2026}}; 
    \node[text width=.25\linewidth] at (.858,1.61)
      {\textemdash};
    \node[text width=.25\linewidth] at (.580,.54)
      {\textemdash}; 
    \node[text width=.25\linewidth] at (.858,.54)
      {\sysname{} \\ {[}this paper{}]};
  \end{tikzpicture}
  \endgroup
  \caption{Two dimensions of schedule synthesis.} 
  \Description{A three-row, two-column matrix with one outer rectangle and no
    internal dividers. Column labels distinguish direct Monte Carlo sampling from
    importance sampling. The first row, MCTS, contains AlphaSyndrome and
    cheaper LER evaluation as a possible extension. The second row,
    MaxSAT, contains PropHunt with direct sampling used for LER validation.
    A dash indicates no importance-sampling method is shown. PropHunt's
    search instead uses ambiguity checks. The third row, reinforcement
    learning, contains searching for lower LER and FastSched combining both
    contributions. The matrix describes design choices and objectives,
    not six measured comparisons.}
  \label{fig:two-dimensions}
\end{figure}

We introduce \sysname{}, which uses reinforcement learning (RL) to search for schedules and importance sampling to evaluate them.  Figure~\ref{fig:two-dimensions} illustrates these choices and what other tools use.
Compared with the state of the art automatic scheduling tools \AS{} and \PH{}, our tool reduces the logical error rate by \meanRed\% and 71.7\% on average, respectively, culminating with a reduction of 97.8\% for a surface code with distance 15.
Figure~\ref{fig:intro-contours} illustrates this comparison, with additional data from our evaluation.
The distance study matches the code, noise model, and decoder.
Search caps are 10 minutes through $d=9$ and 30 minutes thereafter.



\paragraph{Outline of Our Approach.}

Our RL agent learns which gate orders produce a low LER and ultimately selects a single schedule. Specifically, it learns from LER estimates for complete circuits and chooses each check's CNOT order based on earlier choices. During training, our RL agent evaluates thousands of schedules using small sample budgets. After training, it spends larger budgets on candidate re-evaluation to distinguish the retained candidates (Section~\ref{sec:selection}). Concentrating precise evaluation on a few finalists reduces the cost relative to evaluating every training candidate at that precision. Finally, we perform a high-precision LER evaluation of the selected schedule.

Our schedule evaluator builds on rare-event methods for QEC~\cite{BravyiVargo2013rare,Beverland2025failfast,Mayer2025rare}.
This is in contrast to both \AS{} and \PH{}, which use Monte Carlo methods to estimate LERs.
The rare event is a logical error: although the LER depends on the schedule, noise model, and decoder~\cite{AlphaSyndrome2026,PropHunt2026}, most samples do not result in a logical failure.
Thus, we face an instance of \emph{rare-event estimation}, which estimates the probability of outcomes that Monte Carlo sampling rarely observes.
We use importance sampling, which amplifies fault probabilities to make logical failures more frequent and corrects for the changed sampling distribution.

\begin{figure}[t]
  \centering
  \includegraphics[width=\columnwidth]{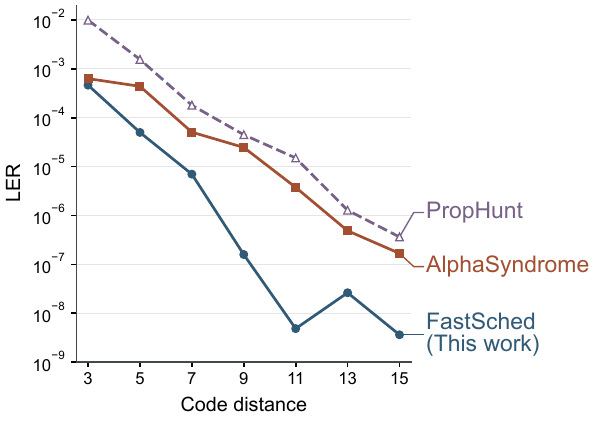}
  \caption{Surface-code LER versus distance under Brisbane noise
    (lower is better). Lines join point estimates;
    Figure~\ref{fig:surface-distance} adds confidence bounds.
    The dashed \PH{} curve shows starting circuits because its searches
    returned no optimized schedules.}
  \Description{Three curves show measured logical error rates at rotated
    surface-code distances three, five, seven, nine, eleven, thirteen,
    and fifteen. The horizontal axis is code distance. The vertical
    axis is LER on a logarithmic scale. Blue circles mark FastSched,
    rust squares mark AlphaSyndrome, and a purple dashed line with open
    triangles marks PropHunt initialization-only circuits. Names appear
    beside the curve endpoints in larger type. FastSched is labeled
    FastSched (This work). All results come from our matched
    Brisbane study with one noisy extraction round, ideal boundaries,
    and PyMatching. LER is the sum of X-memory and Z-memory failure
    rates. Lines join point estimates. Evaluation reports confidence
    bounds. The curves have no numerical point annotations.}
  \label{fig:intro-contours}
\end{figure}


\paragraph{Our Contributions.}
\begin{itemize}[leftmargin=*]
    \item An RL formulation of schedule synthesis that finds lower-LER schedules than 
    the state of the art.
    \item A schedule evaluator based on importance sampling and increasing sample budgets.
    \item A proof that our schedule evaluator is unbiased under a fixed sampling budget.
\end{itemize}

\paragraph{Rest of the Paper.}
In Section~\ref{sec:background} we recall the concepts that we use in the paper.  In Section~\ref{sec:method}, we present our RL agent, and in Section~\ref{sec:amplification}, we present our approach to evaluating schedules.  Finally, in Section~\ref{sec:eval}, we give our evaluation, and in Section~\ref{sec:related}, we discuss related work.

\section{Background}
\label{sec:background}

\paragraph{Stabilizer Codes and Syndrome Extraction}

A stabilizer code is the joint $+1$ eigenspace of a commuting Pauli
group excluding $-I$~\cite{Gottesman_1997}. We call its measured generators
\emph{checks}. An \code{n}{k}{d} code encodes $k$ logical qubits in $n$
data qubits. Distance $d$ is the minimum weight of a data-qubit Pauli
operator that preserves the codespace and changes the logical information.
A Pauli error flips the outcome of each check with which it anticommutes.
The check outcomes form the \emph{syndrome}.

To measure an $X$ check, an ancilla is prepared in $\ket{+}$, acts as
the control of CNOTs onto the check's data qubits, and is measured in
the $X$ basis. A $Z$ check uses an ancilla prepared in $\ket{0}$ as
the CNOT target and measures it in the $Z$ basis~\cite{NielsenChuang2010}.
Within either check, the CNOT order preserves the ideal measurement.
For overlapping $X$ and $Z$ checks, a valid interleaving must also
preserve the joint measurement~\cite{geher2024tangling}.
This can be done by placing all $X$-check
CNOTs before all $Z$-check CNOTs, as illustrated in Figure~\ref{fig:smc}.

Calderbank--Shor--Steane (CSS) codes have checks made entirely of $X$
operators or entirely of $Z$ operators~\cite{CalderbankShor1996,PhysRevLett.77.793}.
Figure~\ref{fig:smc}(a,b)
shows two orders of one weight-four $X$ check. Panel (c) shows how
separate ancillas measure overlapping $X$ and $Z$ checks.

\paragraph{From an Ancilla Fault to a Logical Failure}
\label{sec:hook-background}

\begin{figure}[t]
\centering
\begingroup
\fscircuitstyle
\resizebox{\linewidth}{!}{%
\begin{tikzpicture}[x=1cm,y=1cm,font=\footnotesize,
  wire/.style={draw=fsCircuitInk!65,line width=.4pt}]
\newcommand{\hookframe}{%
  \foreach \r in {1,...,4}{
    \node[anchor=east] at (.32,{-.35*(\r-1)}) {$q_{\r}$};
    \draw[wire] (.44,{-.35*(\r-1)}) -- (3.83,{-.35*(\r-1)});}
  \node[anchor=east] at (.23,-1.54) {$a_X$};
  \draw[wire] (.44,-1.54) -- (3.83,-1.54);
  \node[fill=white,inner sep=.5pt] at (.60,-1.54) {$\ket{+}$};
  \node[fill=white,inner sep=.5pt] at (3.65,-1.54) {$M_X$};}
\foreach \shift/\letter/\order/\second/\third/\hook in
  {0/a/{1,2,3,4}/2/3/{X_3X_4},4.23/b/{1,3,2,4}/3/2/{X_2X_4}}{
  \begin{scope}[shift={(\shift,0)}]
    \node[font=\footnotesize\bfseries] at (2.04,.86)
      {(\letter) Same check $X_1X_2X_3X_4$};
    \node at (2.04,.47) {Order $(\order)$};
    \node[font=\scriptsize,text=fsRed,anchor=south] at (3.66,.12) {Hook};
    \hookframe
    \fscnot{1.00}{0}{-1.54}{black}
    \fscnot{1.56}{-.35*(\second-1)}{-1.54}{black}
    \draw[fsRed,line width=1pt] (1.88,-1.54) -- (3.14,-1.54);
    \draw[fsRed,line width=1pt] (2.34,{-.35*(\third-1)}) -- (3.66,{-.35*(\third-1)});
    \draw[fsRed,line width=1pt] (2.98,-1.05) -- (3.66,-1.05);
    \fscnot{2.34}{-.35*(\third-1)}{-1.54}{fsRed}
    \fscnot{2.98}{-1.05}{-1.54}{fsRed}
    \node[draw=fsRed,fill=white,inner sep=1pt,text=fsRed] at (1.90,-1.54) {$X$};
    \node[fill=white,inner sep=.5pt,text=fsRed] at (3.66,{-.35*(\third-1)}) {$X$};
    \node[fill=white,inner sep=.5pt,text=fsRed] at (3.66,-1.05) {$X$};
    \node[text=fsRed] at (2.04,-1.99) {Data error: $\hook$};
  \end{scope}}
\draw[black!25,line width=.4pt] (.12,-2.41) -- (8.20,-2.41);
\begin{scope}[shift={(0,-3.20)}]
  \node[anchor=west,font=\footnotesize\bfseries] at (.08,.39)
    {(c) Overlapping $X$ and $Z$ checks};
  \node[anchor=east,font=\scriptsize] at (8.18,.39)
    {$X_1X_2X_3X_4$, $Z_1Z_2Z_3Z_4$};
  \foreach \r in {1,...,4}{
    \node[anchor=east] at (.44,{-.34*(\r-1)}) {$q_{\r}$};
    \draw[wire] (.55,{-.34*(\r-1)}) -- (8.05,{-.34*(\r-1)});}
  \foreach \y/\lab/\prep/\basis in {-1.58/a_X/+/X,-2.14/a_Z/0/Z}{
    \node[anchor=east] at (.44,\y) {$\lab$};
    \draw[wire] (.55,\y) -- (8.05,\y);
    \node[fill=white,inner sep=.5pt] at (.73,\y) {$\ket{\prep}$};
    \node[fill=white,inner sep=.5pt] at (7.87,\y) {$M_{\basis}$};}
  \foreach \r in {1,...,4}{
    \fscnot{1.25+.59*(\r-1)}{-.34*(\r-1)}{-1.58}{fsRed}
    \fscnot{4.36+.59*(\r-1)}{-2.14}{-.34*(\r-1)}{fsBlue}}
  \node[text=fsRed] at (2.14,-2.55) {$a_X\to$ data};
  \node[text=fsBlue] at (5.25,-2.55) {data $\to a_Z$};
\end{scope}
\draw[black!25,line width=.4pt] (.12,-6.17) -- (8.20,-6.17);
\foreach \shift/\letter/\title in
  {0/d/{Transverse hook},4.23/e/{Aligned hook}}{
  \begin{scope}[shift={(\shift,-7.00)}]
    \node[font=\footnotesize\bfseries] at (2.04,.40)
      {(\letter) \title};
    \node[font=\scriptsize] at (2.04,.06) {Rotated surface code, $d=3$};
    \fill[fsRed!9] (.78,-.48) rectangle (1.54,-1.24);
    \fill[fsRed!9] (1.54,-1.24) rectangle (2.30,-2.00);
    \fill[fsBlue!9] (1.54,-.48) rectangle (2.30,-1.24);
    \fill[fsBlue!9] (.78,-1.24) rectangle (1.54,-2.00);
    \foreach \x in {.78,1.54,2.30}{
      \draw[wire] (\x,-.48) -- (\x,-2.00);}
    \foreach \y in {-.48,-1.24,-2.00}{
      \draw[wire] (.78,\y) -- (2.30,\y);}
    \draw[fsBlue,dashed,line width=1pt] (1.54,-.23) -- (1.54,-2.22);
    \node[anchor=west,text=fsBlue,font=\scriptsize] at (2.49,-.65)
      {$\bar X=X_2X_4X_5$};
    \foreach \x in {.78,1.54,2.30}{
      \foreach \y in {-.48,-1.24,-2.00}{\fill[black] (\x,\y) circle (1.8pt);}}
    \foreach \x/\y/\q in {.78/-.48/1,1.54/-.48/2,.78/-1.24/3,1.54/-1.24/4,1.54/-2.00/5}{
      \node[anchor=south east,inner sep=1.5pt,font=\scriptsize] at (\x,\y) {$q_{\q}$};}
    \ifdim\shift pt=0pt
      \draw[fsRed,line width=2.4pt] (.78,-1.24) -- (1.54,-1.24);
      \node[anchor=west,text=fsRed,font=\scriptsize] at (2.49,-1.25) {$X_3X_4$};
      \node[font=\scriptsize] at (2.04,-2.61) {Hook crosses the logical string};
    \else
      \draw[fsRed,line width=2.4pt] (1.54,-.48) -- (1.54,-1.24);
      \node[anchor=west,text=fsRed,font=\scriptsize] at (2.49,-1.25) {$X_2X_4$};
      \draw[fsRed,line width=1pt] (1.54,-2.00) circle (3.3pt);
      \node[anchor=west,text=fsRed,font=\scriptsize,align=left] at (2.49,-1.94)
        {Second fault:\\$X_5$};
      \node[font=\scriptsize] at (2.04,-2.61) {Two ancilla faults produce $\bar X$};
    \fi
  \end{scope}}
\end{tikzpicture}}
\endgroup
\caption{Gate order and fault propagation.}
\Description{Two circuits measure X one X two X three X four with CNOT
  orders one two three four and one three two four. An X fault after the
  second CNOT propagates to data qubits three and four in the first
  circuit and two and four in the second. A Hook label sits above each
  pair of propagated X errors. A third circuit shows both
  X-type and Z-type checks on four shared data qubits. The X ancilla
  controls data targets, and the data qubits control the Z ancilla.
  The two lower panels show a three-by-three data grid. The middle column
  carries logical X two X four X five. The horizontal hook in panel d
  crosses this string. The vertical hook in panel e covers its first two
  sites, and a second ancilla fault supplies X five at the third site.}
\label{fig:smc}
\label{fig:surface-codes}
\end{figure}

An ancilla interacts with several data qubits, so one ancilla fault can
cause several data errors. CNOT propagates an $X$ error on its control
to its target, and a $Z$ error on its target to its
control~\cite{NielsenChuang2010}. In Figure~\ref{fig:smc}(a), an ancilla
$X$ fault after the second CNOT spreads to the last two targets,
producing $X_3X_4$. Changing the order to $(1,3,2,4)$ in panel (b)
produces $X_2X_4$ from the same fault. These correlated data errors are
\emph{hook errors}~\cite{Tomita_2014,Beverland2024}.

To assess a hook's effect on the encoded state, data errors are considered
up to stabilizer equivalence. Multiplying a data error by a stabilizer
preserves its action on the codespace~\cite{Gottesman_1997}.
For example, an ancilla $X$ fault between the first and second CNOTs
in Figure~\ref{fig:smc}(a) produces $X_2X_3X_4$ on the data.
Multiplication by the check $X_1X_2X_3X_4$ gives $X_1$.
The resulting data error is therefore equivalent to a single-qubit error.
For circuit-level evaluation, we also track the measurement outcomes
produced by each fault~\cite{Beverland2024}.

The rotated distance-three surface code makes the consequence concrete.
In Figure~\ref{fig:smc}(d,e), the blue vertical string
$\bar X=X_2X_4X_5$ is a weight-three logical operator.
The horizontal hook $X_3X_4$ in panel (d) crosses that string.
The vertical hook $X_2X_4$ in panel (e) supplies two of its three supports.
A second ancilla fault in a boundary check can propagate to $q_5$ alone.
Together, the two faults produce $\bar X$. The code distance is three,
but this aligned schedule admits a logical error from two circuit
faults~\cite{Tomita_2014,Beverland2024}.

\emph{Circuit fault distance} is the minimum number of elementary
faults whose combined effect is an undetectable nontrivial logical
operation for a specified circuit and fault model~\cite{Beverland2024}.
The aligned example gives an upper bound of two. A transverse hook
avoids that particular two-fault construction. Determining the distance
of a complete schedule requires checking all allowed faults.
Schedules with the same circuit fault distance can still have different LERs.

To compare such schedules, we evaluate their LERs under a fixed noise model
and decoder. LER sums the probabilities of all fault patterns that cause
decoder failure. CNOT order changes how faults propagate and which patterns
lead the decoder to choose an incorrect correction~\cite{AlphaSyndrome2026}.
Two schedules can therefore have the same minimum fault count for an
undetectable logical error while differing in the number and probability
of patterns that cause decoding failure.
For example, simulations of the unrotated surface code show that changing
CNOT order changes LER under the same noise model and minimum-weight
perfect matching decoder~\cite[Section~IV.A]{ORourke2025compare}.

\paragraph{Syndrome-Extraction Schedules}

In this paper, we focus on the \emph{single-ancilla syndrome extraction} previously described, where each check has one dedicated ancilla that gets entangled with its data qubits and is then measured. Different checks can operate in parallel using their own ancillas.

A schedule assigns CNOTs to time steps, with at most one gate per qubit in a step. The CNOTs within each check can be ordered in different ways, and the choices across checks affect error propagation, parallelism, circuit depth, and idle noise.

For $m$ checks, where check $i$ has weight $w_i$, the unrestricted order space contains
$\prod_{i=1}^m w_i!$
combinations before exploiting symmetries or imposing restrictions.
For example, the \code{7}{1}{3} Steane code has six weight-four checks, giving
$24^6=191{,}102{,}976$ orderings.
A distance-five rotated surface code has sixteen weight-four and eight weight-two checks, giving
$24^{16}2^8\approx3.1\times10^{24}$ orderings.

\paragraph{Logical Error Rate and Circuit Evaluation}
\label{sec:score-background}

A memory experiment compares logical observables before and after
syndrome extraction. A \emph{detector} is a parity of measurements whose
value is deterministic without faults. A detector can compare outcomes
from different times. Stim's \emph{detector error model} records each
error mechanism's probability and the detector and logical-observable
flips that it causes~\cite{Gidney2021stim}. One \emph{shot} samples an
error pattern and decodes its detector outcomes. A \emph{logical failure}
occurs when the decoder predicts a logical observable incorrectly.
We evaluate separate \(X\)- and \(Z\)-memory experiments and define \(\LER=p_X+p_Z\), where \(p_X\) and \(p_Z\) are their logical failure probabilities.

\paragraph{Decoders}
\label{sec:decoders}

Minimum-weight perfect matching (MWPM) is a widely used decoder for surface codes~\cite{Higgott2022pymatching}. A matching graph represents each detector by a vertex. Each error edge joins two detectors or joins one detector to a boundary. A physical fault can flip more than two detectors. Where a decomposition is available, the decoding model represents that fault through components involving at most two detectors~\cite{higgott2025blossom}.

Belief propagation with ordered-statistics decoding (BP-OSD) operates on a detector-check matrix and supports errors affecting more than two detectors~\cite{Panteleev2021bposd,roffe2020bposd,Higgott2024stimbposd}.

\paragraph{Reinforcement Learning and PPO}
\label{sec:rl-background}

Reinforcement learning trains a policy to choose actions that increase
expected cumulative reward. The environment is a Markov decision process
with states, actions, transitions, and rewards.

The actor $\pi_\theta(a\mid s)$ assigns a probability to each action $a$ in state $s$, where $\theta$ denotes the trainable parameters.
The critic $V_\theta(s)$ predicts the discounted future reward.
An \emph{advantage} estimates how much better an action's outcome is
than the critic's prediction. \emph{Generalized advantage estimation}
(GAE) combines reward and value-prediction errors across a trajectory
to assign credit to earlier decisions~\cite{Schulman2016gae}.

\emph{Proximal policy optimization} (PPO) uses those advantages to
update the actor. Its clipped objective discourages large changes in
action probabilities during each update~\cite{Schulman2017ppo}.
The critic is trained to fit the estimated returns.

\paragraph{Monte Carlo estimation.}
For a memory circuit with logical failure probability $p_L$, direct
Monte Carlo with $N$ shots estimates $p_L$ with variance
$p_L(1-p_L)/N$. For small $p_L$, achieving relative standard error
$\epsilon$ therefore requires approximately
$
N \approx 1/{(p_L\epsilon^2)}.
$
Thus, direct Monte Carlo becomes increasingly expensive as logical
failures become rare~\cite{Beverland2025failfast}.

\section{Our Reinforcement Learning Approach}
\label{sec:method}

Our approach learns which CNOT orders lead to low logical error rates.
The actor proposes orders one check at a time. We evaluate each complete
circuit and use its LER to improve the policy with
PPO~\cite{Schulman2017ppo}. After training, we select one fixed schedule
for syndrome extraction. Figure~\ref{fig:agent} illustrates a gate-order
decision for the Steane code. Figure~\ref{fig:ppo-workflow-in-sysname}
shows the training loop. Figure~\ref{fig:scheduling-workflow} connects
training to candidate re-evaluation and local improvement.

\subsection{Markov Decision Process Formulation}
\label{sec:mdp}

\begin{figure}[t]
\centering
\begin{tikzpicture}[x=1cm,y=1cm,font=\small,
  panel/.style={draw=black!45,rounded corners=2pt,line width=.5pt,fill=white},
  flow/.style={-{Latex[length=2mm]},line width=.65pt,draw=black!80},
  cell/.style={draw=black!25,line width=.3pt,fill=white}]
\draw[panel] (0,0) rectangle (8.24,-3.63);
\draw[black!25] (4.09,-.15) -- (4.09,-2.73);
\draw[black!25] (.15,-2.88) -- (8.09,-2.88);
\node at (1.97,-.28) {Chosen order for $X_A$};
\node[text=fsBlue] at (1.97,-.63) {$q_3,\ q_4,\ q_1,\ q_2$};
\node[anchor=east] at (.70,-1.70) {$H_A$};
\foreach \q in {1,...,7}{
  \node at ({1.36+.31*(\q-1)},-.99) {$\q$};}
\foreach \r in {1,...,4}{
  \node at (1.00,{-1.00-.30*\r}) {$\r$};
  \foreach \q in {1,...,7}{
    \draw[cell] ({1.215+.31*(\q-1)},{-.855-.30*\r}) rectangle ++(.29,-.29);}}
\foreach \r/\q in {1/3,2/4,3/1,4/2}{
  \fill[fsBlue!75] ({1.235+.31*(\q-1)},{-.875-.30*\r}) rectangle ++(.25,-.25);}
\node[text=black!65] at (1.97,-2.62) {Other history slices: all zero};
\node at (6.23,-.28) {Code features};
\node at (6.00,-.65) {$B$};
\node at (7.31,-.65) {$\tau$};
\node at (7.94,-.65) {$\bar w$};
\foreach \r/\lab/\typ in {0/X_A/1,1/X_B/1,2/X_C/1,3/Z_A/0,4/Z_B/0,5/Z_C/0}{
  \node[anchor=east] at (4.94,{-1.00-.29*\r}) {$\lab$};
  \foreach \q in {0,...,6}{
    \draw[cell] ({5.04+.27*\q},{-.875-.29*\r}) rectangle ++(.25,-.25);}
  \node at (7.31,{-1.00-.29*\r}) {$\typ$};
  \node at (7.94,{-1.00-.29*\r}) {$1$};}
\foreach \r/\q in {0/0,0/1,0/2,0/3,1/0,1/2,1/4,1/6,2/2,2/3,2/4,2/5,
                    3/0,3/1,3/2,3/3,4/0,4/2,4/4,4/6,5/2,5/3,5/4,5/5}{
  \fill[black!55] ({5.06+.27*\q},{-.895-.29*\r}) rectangle ++(.21,-.21);}
\node[anchor=west] at (.15,-3.32) {Current check: $X_B$};
\node[anchor=east] at (4.55,-3.32) {$c_2=$};
\foreach \j/\v in {0/0,1/1,2/0,3/0,4/0,5/0}{
  \node[draw=black!35,minimum width=.43cm,minimum height=.37cm,inner sep=0pt]
    at ({4.85+.49*\j},-3.32) {$\v$};}
\draw[fsBlue,line width=1pt] (5.12,-3.52) rectangle (5.56,-3.12);
\draw[flow] (4.12,-3.63) -- node[right] {flatten and concatenate} (4.12,-4.10);
\node[anchor=east] at (.64,-4.33) {$s_2$};
\foreach \left/\right/\lab/\shade in {.8/1.6/c_2/10,1.6/4.0/H/20,4.0/6.1/B/10,6.1/6.9/\tau/20,6.9/7.7/\bar w/10}{
  \draw[draw=black!45,fill=black!\shade] (\left,-4.12) rectangle (\right,-4.55);
  \node at ({(\left+\right)/2},-4.33) {$\lab$};}
\draw[flow] (1.30,-4.55) -- (1.30,-5.08);
\node[anchor=west,text=black!65] at (3.00,-4.81) {228 raw features};
\draw[panel] (0,-5.10) rectangle (4.90,-6.37);
\node[font=\scriptsize] at (1.30,-5.35) {128 neurons, tanh};
\node[font=\scriptsize] at (3.60,-5.35) {128 neurons, tanh};
\foreach \ya in {-5.67,-5.88,-6.09}{\foreach \yb in {-5.67,-5.88,-6.09}{
  \draw[black!18,line width=.35pt] (1.40,\ya) -- (3.50,\yb);}}
\foreach \x in {1.30,3.60}{\foreach \y in {-5.67,-5.88,-6.09}{
  \draw[draw=fsBlue,fill=white,line width=.5pt] (\x,\y) circle (.075);}}
\node[panel,minimum width=2.75cm,minimum height=1.27cm,align=center]
  (actor) at (6.87,-5.735) {actor $\pi_\theta(a_2\mid s_2)$\\24 valid orders};
\draw[flow] (4.90,-5.735) -- (actor.west);
\draw[flow] (actor.south) -- node[right] {sample} (6.87,-6.91);
\draw[panel] (0,-6.93) rectangle (8.24,-7.48);
\node[text=fsBlue] at (4.12,-7.20)
  {Selected order: $a_2=11\quad\longrightarrow\quad q_3,\ q_7,\ q_1,\ q_5$};
\end{tikzpicture}
\caption{Encoding a partial syndrome-extraction schedule.}
\Description{Decision two of an episode for the seven-qubit Steane code. The chosen order three,
  four, one, two for check X A is a four by seven one-hot matrix. The support
  matrix, binary check types, normalized weights, and current-check vector
  complete the state. After flattening, two shared 128-unit tanh layers feed
  the actor. Action eleven chooses order three, seven, one, five for X B.}
\label{fig:agent}
\end{figure}

Gate orders interact across checks. We represent these dependencies
with a Markov decision process (MDP).
A state $s_i$ records the partial schedule before decision $i$.
Action $a_i$ selects the CNOT order for check $i$.
The transition records that order and advances to the next check.
For $m$ checks, an \emph{episode} makes $m$ decisions and produces a
complete schedule. The recorded states, actions, and rewards form a
\emph{trajectory}. The circuit's LER supplies the terminal reward
(Section~\ref{sec:rl-reward}).

The $\code{7}{1}{3}$ Steane code serves as a running example.
For either Pauli type, checks $A$, $B$, and $C$ have supports
$\{1,2,3,4\}$, $\{1,3,5,7\}$, and $\{3,4,5,6\}$.
We visit the three $X$ checks followed by the three $Z$ checks.
Figure~\ref{fig:agent} shows the second decision: the actor has chosen
$(3,4,1,2)$ for $X_A$ and now chooses an order for $X_B$.
All displayed indices are one-based.


\paragraph{Encoding a partial schedule.}
The state combines earlier choices with the code structure:
\begin{equation}
s_i=\bigl[c_i\,\Vert\,H\,\Vert\,B\,\Vert\,\tau\,\Vert\,\bar w\bigr].
\label{eq:state}
\end{equation}
Here $\Vert$ denotes concatenation after flattening the arrays.
The vector $c_i\in\{0,1\}^m$ marks the current check.
The history $H\in\{0,1\}^{m\times w_{\max}\times n}$ has
$H_{j,\ell,q}=1$ when slot $\ell$ of check $j$ visits qubit $q$.
Undecided checks and unused slots are zero.
The fixed features are the support matrix $B_{j,q}$, check type
$\tau_j$ (one for $X$, zero for $Z$), and normalized weight
$\bar w_j=w_j/w_{\max}$.
For Steane, these arrays give 228 inputs.
In Figure~\ref{fig:agent}, $H_A$ records $3,4,1,2$, $c_2$ marks $X_B$,
and $B$ identifies shared qubits 1 and 3.

\paragraph{Choosing an action.}
The actor chooses from a fixed table $\mathcal A_i$ of CNOT orders.
Checks with $w_i!\leq32$ use all permutations. Larger checks use 32
orders drawn from rotations, reversals, and random permutations of the
sorted support. The cap lets training revisit actions within a fixed
budget. Local improvement can add further orders
(Section~\ref{sec:selection}). Appendix~\ref{app:training-details}
gives the reproducible table construction.

Two shared, fully connected layers of 128 $\tanh$ units feed the actor
and critic. The actor masks invalid table entries before sampling.
The critic predicts $V_\theta(s_i)$, where $\theta$ contains all network
weights and biases. Recording the chosen order in $H$ and advancing
$c_i$ produces the next state.

\paragraph{Turning the orders into a circuit.}
A deterministic scheduler assigns the chosen gates to time steps.
All $X$-check CNOTs form an \emph{$X$ block}, which finishes before
the \emph{$Z$ block} begins. Gates from several checks can overlap
within each block.

Within a block, the scheduler visits CNOT positions in order and checks
in their fixed order. Each gate takes the earliest step after its check's
preceding gate at which both qubits are free.
In the Steane example, $X_A$ and $X_B$ both begin on $q_3$.
The scheduler places $X_A$'s first gate at $t_1$ and $X_B$'s first gate at $t_2$.
These placement rules determine parallelism, idle periods, and circuit depth.

\subsection{Objective and Reward}
\label{sec:rl-reward}

\begin{figure}[t]
\centering
\resizebox{\columnwidth}{!}{%
\begin{tikzpicture}[x=1cm,y=1cm,font=\small,
  block/.style={draw=black,rounded corners=2pt,fill=white,
    line width=.6pt,align=center,inner sep=4pt},
  flow/.style={-{Latex[length=2mm,width=1.3mm]},draw=black,
    line width=.65pt,shorten <=1pt,shorten >=1pt},
  update/.style={flow,dashed}]
\draw[black,rounded corners=3pt,line width=.6pt]
  (.48,-.15) rectangle (4.93,-3.33);
\node[font=\small\bfseries] at (2.705,-.45) {Actor--critic};
\node[block,minimum width=1.50cm,minimum height=.82cm]
  (shared) at (1.49,-1.920) {Shared\\layers};
\node[block,minimum width=1.98cm,minimum height=.80cm]
  (actor) at (3.72,-1.27) {Actor\\$\pi_\theta(a_i\mid s_i)$};
\node[block,minimum width=1.98cm,minimum height=.80cm]
  (critic) at (3.72,-2.57) {Critic\\$V_\theta(s_i)$};
\node[block,minimum width=2.60cm,minimum height=2.53cm]
  (env) at (6.92,-2.065)
  {\textbf{Environment}\\Record CNOT order\\At episode end:\\Build circuit\\Estimate LER};

\draw[flow] (env.north) -- (6.92,.18) -- (.06,.18)
  -- (.06,-1.920) -- (shared.west);
\node at (4.08,.46) {Partial-schedule state $s_i$};
\coordinate (split) at (2.47,-1.920);
\draw (shared.east) -- (split);
\draw[flow] (split) |- (actor.west);
\draw[flow] (split) |- (critic.west);
\draw[flow] (actor.east) --
  node[above,inner sep=2pt] {$a_i$} (env.west |- actor.east);

\node[block,minimum width=5.72cm,minimum height=.78cm,
      font=\small\bfseries] (train) at (5.36,-4.43) {GAE advantages $\to$ PPO update};
\draw[flow] (critic.south) -- (critic.south |- train.north);
\node[anchor=east] at (3.53,-3.68) {values};
\coordinate (rollout) at (5.17,-1.27);
\fill (rollout) circle (1.2pt);
\draw[flow] (rollout) -- (rollout |- train.north);
\node[anchor=west] at (5.32,-3.68) {trajectory};
\draw[flow] (env.south) -- (env.south |- train.north);
\node[anchor=west] at (7.08,-3.68) {reward};
\draw[update] (train.west) -- (1.35,-4.43) -- (1.35,-3.33);
\node[anchor=east,align=center] at (1.15,-3.86) {update\\$\theta$};
\node[block,minimum width=5.72cm,minimum height=.90cm]
  (pool) at (5.36,-5.92)
  {\textbf{Candidate pool}\\Up to 300 schedules + LER estimates};
\draw[flow] (env.east) -- (8.55,-2.065) -- (8.55,-5.92) -- (pool.east);
\node[font=\scriptsize,rotate=90,fill=white,inner sep=2pt]
  at (8.55,-4.20) {schedule, LER};
\draw[flow] (.65,-5.92) -- (pool.west);
\node[font=\scriptsize,align=center,fill=white,inner sep=1pt]
  at (1.48,-5.47) {Auxiliary\\exploration};
\draw[flow] (pool.south) -- (5.36,-6.83);
\node[anchor=north,font=\small] at (5.36,-6.89) {Post-training selection};
\end{tikzpicture}}
\caption{Actor--critic PPO training loop in \sysname{}.}

\label{fig:ppo-workflow-in-sysname}

\Description{The interaction row contains shared layers feeding actor and
  critic heads beside the scheduling environment. A state loop returns from
  the environment to the shared layers. The actor sends an action to the
  environment. Separate paths carry the recorded trajectory, critic values, and
  terminal reward to GAE and PPO in a second row. A dashed arrow updates the
  entire actor-critic group. Completed schedules and their logical error
  estimates also enter a candidate pool of up to 300 entries. Auxiliary
  exploration contributes to this pool. The pool feeds post-training selection.}
\end{figure}

The reward should favor reliable circuits while encouraging the policy
to try less-used orders. We first evaluate a complete schedule $\sigma$
with the $Z$- and $X$-memory circuits from
Section~\ref{sec:score-background}. Direct Monte Carlo samples error
patterns and decodes each shot. With $f_b$ failures in $N_b$ shots for
$b\in\{Z,X\}$, the estimated score is
\begin{equation}
\widehat{\LER}(\sigma)=\frac{f_Z}{N_Z}+\frac{f_X}{N_X}.
\label{eq:mc-ler}
\end{equation}
For a code with several logical qubits, any incorrect tracked observable
counts as a failure. Code, noise, memory duration, and decoder settings
stay fixed during search.

We convert LER into a reward on a logarithmic scale and add an
exploration bonus $B(\sigma)$:
\begin{equation}
\begin{aligned}
r_{\mathrm{LER}}(\sigma)
&=-\log_{10}\max\bigl(\widehat{\LER}(\sigma),\varepsilon_L\bigr),\\
r_i&=\begin{cases}
0, & i<m,\\[2pt]
r_{\mathrm{LER}}(\sigma)+B(\sigma), & i=m.
\end{cases}
\end{aligned}
\label{eq:reward}
\end{equation}

The floor $\varepsilon_L=10^{-9}$ keeps the logarithm finite.
Above this floor, halving LER adds $\log_{10}2$ to the LER reward.
Training rewards update PPO. Independent assessment measures the selected
schedule's LER after search.

The exploration bonus rewards orders that the search has used less often:
\begin{equation}
B(\sigma)=\sum_{j=1}^{m}\frac{\beta}{\sqrt{C_j(a_j)}}.
\end{equation}
Here $C_j(a)$ is one plus the number of earlier selections of action $a$
for check $j$ in the current run, read before the current selection.
Policy and exploration candidates both update these counts.
We use $\beta=0.02$. The inverse-square-root form follows count-based
exploration~\cite{Strehl2008mbie,Bellemare2016count}.
The bonus guides training. Final selection ranks schedules by LER.

\subsection{Learning Algorithm}
\label{sec:ppo}

\input{tex/fig-scheduling-workflow}

Training turns feedback on complete circuits into better gate-order
choices. Each batch has three steps: collect schedules with the current
policy, evaluate their LERs, and update the actor and critic.
Figure~\ref{fig:ppo-workflow-in-sysname} shows this loop.
Each independent run starts with new network parameters and visit counts.
The run also maintains a \emph{candidate pool}: a list of up to 300
distinct schedules and their estimated LERs, saved for final selection.
After each batch, we retain the entries with the lowest LER estimates.
Training continues within preset batch, time, and optional shot limits.
Appendix~\ref{app:training-details} explains this budget check in
Figure~\ref{fig:scheduling-workflow}.

\paragraph{Assigning credit to earlier choices.}
The final circuit reveals the reward only after all checks have been
scheduled. We store the states, chosen actions, action probabilities,
and critic predictions during collection. Once the terminal reward is
available, generalized advantage estimation (GAE) works backward through
the stored trajectory~\cite{Schulman2016gae}.
The actor and critic stay fixed until collection and evaluation finish.

For the batch's critic $V_{\mathrm{old}}$, we compute
\begin{equation}
\begin{aligned}
\delta_i&=r_i+\gamma V_{\mathrm{old}}(s_{i+1})-V_{\mathrm{old}}(s_i),\\
\widehat A_i&=\delta_i+\gamma\lambda\widehat A_{i+1}.
\end{aligned}
\label{eq:gae}
\end{equation}
The boundary values are $V_{\mathrm{old}}(s_{m+1})=\widehat A_{m+1}=0$.
We use $\gamma=1$ and $\lambda=0.95$.
The recursion passes later prediction errors to earlier gate-order
choices.
The critic target is $y_i=V_{\mathrm{old}}(s_i)+\widehat A_i$.
We center and scale the advantages across the batch to obtain
$\widetilde A_i$.

\paragraph{Updating the policy.}
PPO uses these advantages to adjust the probabilities of the recorded
actions. Let $\pi_{\mathrm{old}}$ be the actor that collected the batch.
The probability ratio and its clipped version are
\begin{equation}
\rho_i=\frac{\pi_\theta(a_i\mid s_i)}{\pi_{\mathrm{old}}(a_i\mid s_i)},
\qquad \bar\rho_i=\operatorname{clip}(\rho_i,0.8,1.2).
\end{equation}
PPO maximizes the clipped objective~\cite{Schulman2017ppo}:
\begin{equation}
L_{\mathrm{clip}}=
\left\langle\min\bigl(\rho_i\widetilde A_i,
\bar\rho_i\widetilde A_i\bigr)\right\rangle.
\label{eq:ppo-clip}
\end{equation}
Angle brackets denote the batch mean over policy decisions.
A positive advantage favors increasing the chosen action's probability.
A negative advantage favors a decrease.
Clipping limits the credited gain from large changes
(Appendix~\ref{app:training-details}).

We make four update passes over the full batch before collecting new
trajectories. Each pass minimizes
\begin{equation}
-L_{\mathrm{clip}}
+0.5\left\langle\bigl(V_\theta(s_i)-y_i\bigr)^2\right\rangle
-0.03\left\langle\mathcal H(\pi_\theta(\cdot\mid s_i))\right\rangle.
\label{eq:ppo-loss}
\end{equation}
where \(\mathcal H\) denotes policy entropy.
The squared-error term trains the critic, and the entropy term encourages
action diversity. We use Adam with learning rate $3\times10^{-4}$ and
clip the gradient norm to $0.5$.

\paragraph{Coordinated exploration.}
Shared gate-order patterns help explore coordinated choices across
checks. Auxiliary candidates draw either one shared table index per
group of checks with the same Pauli type and weight, or independent
uniform indices. These candidates enter the saved pool and update visit
counts. PPO uses actor-generated trajectories.
Appendix~\ref{app:exploration} gives the settings and an example.
Section~\ref{sec:ablation} measures the effect.

\subsection{Post-Training Selection}
\label{sec:selection}

Deployment requires one fixed schedule. We merge and deduplicate the
training pools, then rank a shortlist in two passes with fresh samples.
The second pass gives the survivors larger budgets.

Optional local improvement begins separately from the best re-evaluated
schedules. Each starting schedule initializes one local search.
A round tests changes to one check's CNOT order.
We accept a change when its estimated LER
improvement exceeds one combined standard error. Close positive
differences receive more samples. An accepted schedule receives a fresh
LER estimate before the next round. A rejected change, the round limit,
or exhausted untested changes ends that local search. We return the
refined schedule with the lowest latest estimate. Disabling local improvement
returns the best re-evaluated candidate.

Figure~\ref{fig:scheduling-workflow} shows the fixed output.
Appendix~\ref{app:selection} gives the budgets, acceptance rule, and
stopping conditions. Selection counts toward compilation cost.
Independent assessment follows selection (Section~\ref{sec:sig}).

\section{Schedule Evaluation}
\label{sec:amplification}

Direct Monte Carlo becomes expensive as schedules improve and logical
failures become rare (Section~\ref{sec:score-background}).
For example, at $p_L=10^{-4}$, achieving relative standard error $0.1$
requires about $10^6$ shots per memory circuit.
Moreover, a direct evaluation observes zero failures with probability
$(1-p_L)^N$. When $Np_L\ll1$, many schedules therefore receive the same
floor-based reward.

We use importance sampling to obtain more informative LER estimates with
fewer shots. We amplify fault probabilities to observe failures more
frequently, then correct for the changed sampling distribution.

\subsection{Overview}

\input{tex/fig-noise-training-loop}

Figure~\ref{fig:noise-training-loop} illustrates our schedule evaluator.
To evaluate a schedule $\sigma$ under noise model $P$, we sample under
an amplified distribution $Q$. Weighting each failed sample by the
likelihood ratio $P/Q$ estimates the LER under
$P$~\cite{Owen2013mc,Frank2008rare}.
The resulting LER estimate provides the reward signal, which PPO uses to update the trainable parameters $\theta$ of the agent's policy.
The candidate circuit and target-noise decoder stay fixed while the
sampling distribution changes. Section~\ref{sec:learning-curves}
compares training-shot counts under direct Monte Carlo and importance
sampling with matched budgets.

\subsection{Importance-Sampling Estimator}
\label{sec:estimator}

We use Stim to construct a detector error model for schedule $\sigma$.
Let \(\mathcal E\) contain the independent error mechanisms in this detector error model.
Each mechanism $e\in\mathcal E$ occurs with probability $0<p_e<1$ and
flips fixed detectors and logical observables.
For amplification factor $k>1$, the sampling probability is

$$
q_e(k)=\min\{kp_e,0.45\}.
$$
Amplification makes failures more frequent. The implementation caps
each probability at 0.45 to limit the change in the error distribution.
Unbiasedness follows from the likelihood correction below.
At $k=1$, we use $q_e=p_e$ exactly. Mechanisms with probability zero or one
retain their original probabilities for every $k$.

Let $P$ denote the target error probability distribution with
probabilities $p_e$, and $Q_k$ the amplified error probability distribution
with probabilities $q_e(k)$.
An error pattern $x\in\{0,1\}^{|\mathcal E|}$ records which mechanisms
occur. We keep the decoder built for $P$ fixed while sampling from $Q_k$.

Let \(F_\sigma(x)\in\{0,1\}\) indicate whether pattern \(x\) causes a logical failure.
For a failure indicator $F_\sigma(x)$ and $N$ independent samples, the
importance-sampling estimator is
\begin{equation}
\begin{aligned}
W_k(x)
&=\frac{P(x)}{Q_k(x)}
=\prod_{e\in\mathcal E}
\left(\frac{p_e}{q_e}\right)^{x_e}
\left(\frac{1-p_e}{1-q_e}\right)^{1-x_e},
\\
\widehat p_{L,k}(\sigma)
&=\frac{1}{N}\sum_{h=1}^{N}
W_k(x^{(h)})F_\sigma(x^{(h)}).
\end{aligned}
\label{eq:is}
\end{equation}
We compute weights~\cite{Owen2013mc} in log space, including occurring
and absent mechanisms. Every error pattern possible under $P$ also has
positive probability under $Q_k$. At $k=1$, all weights equal one and the
estimator is direct MC.

\begin{theorem}[Fixed-budget unbiasedness]
\label{thm:unbiased}
For fixed $\sigma$, decoder, $k$, and $N\ge1$, if $Q_k(x)>0$ whenever
$P(x)>0$, then
$$
\mathbb E_{Q_k}[\widehat p_{L,k}(\sigma)]=p_L(\sigma).
$$
\end{theorem}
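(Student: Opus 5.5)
The plan is to reduce the statement to the classical change-of-measure identity for a single sample, then use linearity of expectation over the $N$ samples. First, fix $\sigma$, the decoder, and $k$. Then $F_\sigma$ is a deterministic function of the error pattern $x\in\{0,1\}^{|\mathcal E|}$. This holds because the decoder built for $P$ is held fixed while sampling from $Q_k$, and it maps the detector outcomes determined by $x$ to a predicted observable. Likewise, $W_k(x)=P(x)/Q_k(x)$ is a deterministic function of $x$, since the $q_e(k)$ are fixed numbers. With this identification, we write
$$
p_L(\sigma)=\mathbb E_P[F_\sigma(x)]=\sum_{x} P(x)F_\sigma(x).
$$
The sum is finite because the sample space $\{0,1\}^{|\mathcal E|}$ is finite. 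No integrability issues arise.

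Next, we verify the single-sample identity $\mathbb E_{Q_k}[W_k(x)F_\sigma(x)]=p_L(\sigma)$. We split the sum over $x$ according to whether $Q_k(x)>0$. On the set where $Q_k(x)>0$, the terms $Q_k(x)\cdot\frac{P(x)}{Q_k(x)}F_\sigma(x)$ collapse to $P(x)F_\sigma(x)$. On the set where $Q_k(x)=0$, the hypothesis gives $P(x)=0$, so these patterns contribute nothing to $p_L(\sigma)$. They are also never sampled under $Q_k$, so they contribute nothing to the $Q_k$-expectation either. The weight is undefined there, but only on a $Q_k$-null set, which I would state explicitly. Summing the two pieces gives the identity.

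I would also check that the product formula in \eqref{eq:is} agrees with $P(x)/Q_k(x)$. This follows because both $P$ and $Q_k$ are product measures over independent mechanisms. For each $e$, $P$ assigns $p_e^{x_e}(1-p_e)^{1-x_e}$, and $Q_k$ assigns the same with $q_e$ in place of $p_e$. Mechanisms with $p_e\in\{0,1\}$ keep $q_e=p_e$ and contribute a factor of $1$ on their support. The cap $q_e=\min\{kp_e,0.45\}$ still lies in $(0,1)$ whenever $0<p_e<1$, which confirms the support hypothesis in the paper's setting.

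Finally, the samples $x^{(1)},\dots,x^{(N)}$ are i.i.d.\ from $Q_k$ and $N$ is fixed, not data-dependent. Linearity of expectation therefore gives
$$
\mathbb E_{Q_k}[\widehat p_{L,k}(\sigma)]=\frac1N\sum_{h=1}^N \mathbb E_{Q_k}[W_k(x^{(h)})F_\sigma(x^{(h)})]=p_L(\sigma).
$$
Here we only need each sample to be marginally distributed as $Q_k$, which is why the result is specifically a fixed-budget statement.

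The main obstacle is conceptual rather than computational. We must ensure that $F_\sigma$ does not depend on $k$ or on the sampling distribution, for example through a decoder reweighted by $Q_k$. We must also ensure that $N$ is not chosen adaptively from the observed failures. Either would break the linearity step. I would address this by stating these as the fixed quantities in the hypothesis, and by noting that the increasing-budget schemes used elsewhere are outside the theorem's scope.
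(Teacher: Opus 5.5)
Your proposal is correct and follows the same route as the paper: the single-sample change-of-measure identity $\sum_x Q_k(x)W_k(x)F_\sigma(x)=\sum_x P(x)F_\sigma(x)=p_L(\sigma)$, then linearity of expectation over the fixed $N$ samples. Your extra care about the $Q_k$-null set, the product form of $W_k$, and the non-adaptive $N$ fills in details the paper's two-line proof leaves implicit.
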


\begin{proof}
For one sample,
$$
\sum_x Q_k(x)W_k(x)F_\sigma(x)
=\sum_xP(x)F_\sigma(x)
=p_L(\sigma).
$$
Linearity gives the result for the sample mean.
\end{proof}

Summing the separate $Z$- and $X$-memory estimates gives
$\widehat{\LER}_k$.

Substituting the weighted estimate into Eq.~\eqref{eq:reward} gives
the training reward:
\begin{equation}
r_m^{(k)}
=-\log_{10}\max\bigl(\widehat{\LER}_k(\sigma),\varepsilon_L\bigr)
+B(\sigma).
\label{eq:weighted-reward}
\end{equation}

Importance sampling supplies LER estimates to all three stages in
Figure~\ref{fig:scheduling-workflow}, using the RL algorithm from
Section~\ref{sec:method}. Local improvement combines the weighted
uncertainty in Appendix~\ref{app:weighted-uncertainty} with the acceptance
rule in Appendix~\ref{app:selection}.

Theorem~\ref{thm:unbiased} applies to the fixed-budget LER estimator.
The nonlinear logarithm can nevertheless bias finite-sample rewards,
and adaptive stopping below introduces an additional source of bias.
We therefore assess final schedules independently with target-noise MC
(Section~\ref{sec:sig}).

\subsection{Choosing the Amplification Factor}
\label{sec:calibration}

Three pilots calibrate $k$ to balance failure frequency, weight
variance, and decoding cost. The median choice remains fixed throughout
search; Appendix~\ref{app:calibration} provides the full procedure and
Figure~\ref{fig:evaluation-control} its workflow.

\subsection{Allocating Shots per Schedule}
\label{sec:comparison}

Each candidate needs enough failure information to guide the search.
The effective sample size (ESS) of failure contributions supplies a
stopping rule that accounts for unequal importance weights.

We pool equal shot counts from the two memory circuits.
Let $Y_i$ be the importance weight of a failed shot and zero otherwise.
Each normalized contribution $\alpha_i=Y_i/\sum_jY_j$ is that shot's
share of the total weighted failures. Squaring these shares emphasizes
large contributors. The reciprocal of their sum gives the effective
failure count~\cite[Section~9.3]{Owen2013mc}:
\begin{equation}
\ESS_{\mathrm{fail}}
=\frac{1}{\sum_i\alpha_i^2}
=\frac{\bigl(\sum_iY_i\bigr)^2}{\sum_iY_i^2}.
\label{eq:ess}
\end{equation}
We set the count to zero until a failure contributes positive weight.
For equally weighted failures, the count is their number. At $k=1$,
it therefore equals the observed failure count.

\paragraph{Stopping an evaluation.}
We sample both memory circuits in batches until
$\ESS_{\mathrm{fail}}$ reaches target $T$ or shots per circuit reach
$\Nmax$.
Training uses $T=30$. Candidate re-evaluation raises the target to 200
and then 500 as the pool narrows.
Local improvement similarly spends larger budgets on promising changes
and small estimated improvements, using Eq.~\eqref{eq:accept} with
weighted LER and the uncertainty estimate in Eq.~\eqref{eq:is-uncertainty}.
This margin is a search heuristic under adaptive stopping.
Thus $k$ controls the error distribution, while $T$ and $\Nmax$ control
the information target and maximum cost of each evaluation.

With a fixed shot budget, the estimator's variance falls as $1/N$
and Theorem~\ref{thm:unbiased} applies. Adaptive stopping can bias the
estimate. Independent target-noise assessment measures the selected
schedule's LER (Section~\ref{sec:sig}).

\section{Evaluation}
\label{sec:eval}
Our evaluation addresses three questions.
\begin{description}[style=unboxed,leftmargin=0pt,labelindent=0pt,itemsep=1pt,topsep=3pt]
  \item[\textbf{RQ1---Effectiveness:}] How much does LER improve over prior methods?
  \item[\textbf{RQ2---Scalability:}] Does \sysname{} scale as code distance grows and physical noise falls?
  \item[\textbf{RQ3---Ablation study:}] Which components contribute to the final schedule quality?
\end{description}

\begin{figure*}[t]
\centering
\begingroup
\fscircuitstyle
\begin{tikzpicture}[x=1cm,y=1cm,font=\footnotesize,
 wire/.style={draw=fsCircuitInk!65,line width=.3pt},
 tick/.style={draw=black!15,line width=.3pt}]

\renewcommand{\fscnot}[4]{%
  \draw[#4,line width=.5pt] ({#1},{#2}) -- ({#1},{#3});
  \fill[#4] ({#1},{#3}) circle (1.3pt);
  \coordinate (fsgatecenter) at ({#1},{#2});
  \draw[#4,fill=white,line width=.5pt] (fsgatecenter) circle (2.15pt);
  \draw[#4,line width=.5pt] ([xshift=-2.15pt]fsgatecenter) -- ([xshift=2.15pt]fsgatecenter);
  \draw[#4,line width=.5pt] ([yshift=-2.15pt]fsgatecenter) -- ([yshift=2.15pt]fsgatecenter);}
\newcommand{\fscircuitframe}[1]{%
  \pgfmathsetmacro{\fsmeasure}{1.93+1.38*#1}%
  \foreach \t in {1,...,#1}{
    \pgfmathsetmacro{\fsleft}{1.62+1.38*(\t-1)}%
    \ifodd\t \fill[black!3] (\fsleft,.14) rectangle ++(1.38,-4.30);\fi
    \draw[tick] (\fsleft,.14) -- ++(0,-4.30);
    \node at ({2.31+1.38*(\t-1)},.35) {$t_{\t}$};}
  \draw[tick] ({1.62+1.38*#1},.14) -- ++(0,-4.30);
  \foreach \r in {1,...,7}{
    \node[anchor=east] at (1.08,{-.32*(\r-1)}) {$q_{\r}$};
    \draw[wire] (1.26,{-.32*(\r-1)}) -- (\fsmeasure,{-.32*(\r-1)});}
  \foreach \r/\lab/\prep/\meas in {0/X_A/+/X,1/X_B/+/X,2/X_C/+/X,3/Z_A/0/Z,4/Z_B/0/Z,5/Z_C/0/Z}{
    \node[anchor=east] at (.65,{-2.46-.32*\r}) {$\lab$};
    \node at (1.02,{-2.46-.32*\r}) {$\ket{\prep}$};
    \draw[wire] (1.26,{-2.46-.32*\r}) -- (\fsmeasure,{-2.46-.32*\r});
    \node[fill=white,inner sep=1pt] at (\fsmeasure,{-2.46-.32*\r}) {$M_{\meas}$};}
  \draw[black!20] (.20,-2.19) -- (\fsmeasure,-2.19);}
\newcommand{\fsxgate}[4]{%
  \fscnot{1.85+.46*(3*(#1-1)+#2)}{-.32*(#3-1)}{-2.46-.32*#2}{#4}}
\newcommand{\fszgate}[4]{%
  \fscnot{1.85+.46*(3*(#1-1)+#2)}{-3.42-.32*#2}{-.32*(#3-1)}{#4}}
\newcommand{\fshooktrace}[2]{%
  \draw[#1,line width=1.2pt] (4.46,-3.10) -- (6.91,-3.10);
  \node[draw=#1,fill=white,inner sep=1pt,
        line width=.9pt,text=#1] at (4.66,-3.10) {$X$};
  \draw[#1,line width=1.2pt] (5.53,{-.32*(#2-1)}) -- (7.34,{-.32*(#2-1)});
  \draw[#1,line width=1.2pt] (6.91,-.64) -- (7.34,-.64);
  \node[fill=white,inner sep=1pt,text=#1] at (7.45,{-.32*(#2-1)}) {$X$};
  \node[fill=white,inner sep=1pt,text=#1] at (7.45,-.64) {$X$};}

\node[anchor=west,font=\small\bfseries] at (0,.52)
 {(a) First policy candidate: CNOT depth 10};
\node[anchor=west,font=\small\bfseries] at (8.60,.52)
 {(b) Delivered schedule: CNOT depth 9};
\begin{scope}[shift={(0,-.29)},x=.50cm,y=.86cm,font=\scriptsize]

  \fscircuitframe{10}
  \fsxgate{1}{0}{3}{black!80}
  \fsxgate{2}{0}{4}{black!80}
  \fsxgate{3}{0}{1}{black!80}
  \fsxgate{4}{0}{2}{black!80}
  \fsxgate{2}{1}{3}{black!80}
  \fsxgate{3}{1}{7}{black!80}
  \fsxgate{4}{1}{1}{black!80}
  \fsxgate{5}{1}{5}{black!80}
  \fsxgate{1}{2}{4}{black!80}
  \fsxgate{2}{2}{6}{black!80}
  \fsxgate{3}{2}{5}{fsRed}
  \fsxgate{4}{2}{3}{fsRed}
  \fszgate{6}{0}{1}{black!80}
  \fszgate{7}{0}{4}{black!80}
  \fszgate{8}{0}{2}{black!80}
  \fszgate{9}{0}{3}{black!80}
  \fszgate{7}{1}{1}{black!80}
  \fszgate{8}{1}{3}{black!80}
  \fszgate{9}{1}{7}{black!80}
  \fszgate{10}{1}{5}{black!80}
  \fszgate{6}{2}{4}{black!80}
  \fszgate{7}{2}{3}{black!80}
  \fszgate{8}{2}{5}{black!80}
  \fszgate{9}{2}{6}{black!80}
  \fshooktrace{fsRed}{5}
\end{scope}
\begin{scope}[shift={(8.60,-.29)},x=.50cm,y=.86cm,font=\scriptsize]

  \fscircuitframe{9}
  \fsxgate{1}{0}{3}{black!80}
  \fsxgate{2}{0}{4}{black!80}
  \fsxgate{3}{0}{2}{black!80}
  \fsxgate{4}{0}{1}{black!80}
  \fsxgate{1}{1}{1}{black!80}
  \fsxgate{2}{1}{3}{black!80}
  \fsxgate{3}{1}{7}{black!80}
  \fsxgate{4}{1}{5}{black!80}
  \fsxgate{1}{2}{5}{black!80}
  \fsxgate{2}{2}{6}{black!80}
  \fsxgate{3}{2}{4}{fsBlue}
  \fsxgate{4}{2}{3}{fsBlue}
  \fszgate{5}{0}{1}{black!80}
  \fszgate{6}{0}{2}{black!80}
  \fszgate{7}{0}{4}{black!80}
  \fszgate{8}{0}{3}{black!80}
  \fszgate{5}{1}{5}{black!80}
  \fszgate{6}{1}{7}{black!80}
  \fszgate{7}{1}{3}{black!80}
  \fszgate{8}{1}{1}{black!80}
  \fszgate{6}{2}{5}{black!80}
  \fszgate{7}{2}{6}{black!80}
  \fszgate{8}{2}{4}{black!80}
  \fszgate{9}{2}{3}{black!80}
  \fshooktrace{fsBlue}{4}
\end{scope}
\draw[fsRed,line width=.6pt] (0,-4.06) -- (8.15,-4.06);
\node at (4.075,-4.32) {Fault after $X_C$ CNOT 2: $E=X_3X_5$};
\node at (4.075,-4.70) {Syndrome $(1,0,0)$, BP-OSD recovery $R\sim X_2$};
\node[text=fsRed,font=\footnotesize\bfseries] at (4.075,-5.10)
 {$ER=X_2X_3X_5=\overline X$: logical failure};
\draw[fsBlue,line width=.6pt] (8.60,-4.06) -- (16.85,-4.06);
\node at (12.725,-4.32) {Same fault and slot: $E=X_3X_4$};
\node at (12.725,-4.70) {Syndrome $(0,1,0)$, BP-OSD recovery $R\sim X_1X_2$};
\node[text=fsBlue,font=\footnotesize\bfseries] at (12.725,-5.10)
 {$ER=X_1X_2X_3X_4=X_A$: harmless stabilizer};
\end{tikzpicture}
\endgroup
\caption{Two Steane schedules produce different decoded outcomes from
  the same ancilla fault under Brisbane noise and BP-OSD.
  Here $\sim$ denotes equivalence up to a stabilizer.}
\Description{Two recorded circuit layouts have seven data wires and six
  ancillas. The first candidate has depth ten and the selected circuit has
  depth nine. The same
  X fault on the X C ancilla after its second CNOT produces hook X3 X5 in
  the first circuit. Syndrome 100 leads to a recovery equivalent to X2,
  leaving logical X2 X3 X5. The delivered circuit produces hook X3 X4.
  Syndrome 010 leads to a recovery equivalent to X1 X2, leaving the
  harmless stabilizer X1 X2 X3 X4. Rust and blue highlight these paths.}
\label{fig:full-pipeline}
\end{figure*}

\begin{table}[tbp]
\centering\small
\setlength{\tabcolsep}{2.5pt}
\caption{Final LERs and reductions versus the released \AS{} and \PH{}
baselines. LERs have three significant figures. Reductions use unrounded
rates: $100(1-\LER_{\mathrm{FS}}/\LER_{\mathrm{base}})\%$.
Suite averages weight circuits equally. Bold $\downarrow$ marks an
estimated reduction, not a confidence claim.}
\label{tab:rq1-compact}
\begin{tabular}{@{}llrrr@{}}
\toprule
Circuit & $\llbracket n,k,d\rrbracket$ & \shortstack[r]{Baseline\\LER} & \shortstack[r]{Our\\LER} & \shortstack[r]{Reduction\\(\%)}\\
\midrule
\multicolumn{5}{c}{\textbf{Versus \AS{}: 11 circuits, \meanRed\% avg. decrease}}\\
\midrule
Hex. col. & \code{7}{1}{3} & $1.02\!\times\!10^{-3}$ & $7.57\!\times\!10^{-4}$ & $\boldsymbol{25.6\downarrow}$\\
Sq.-oct. col. & \code{7}{1}{3} & $9.34\!\times\!10^{-4}$ & $7.37\!\times\!10^{-4}$ & $\boldsymbol{21.1\downarrow}$\\
Surface & \code{9}{1}{3} & $5.00\!\times\!10^{-4}$ & $4.40\!\times\!10^{-4}$ & $\boldsymbol{12.1\downarrow}$\\
Hex. col. & \code{19}{1}{5} & $9.97\!\times\!10^{-4}$ & $5.04\!\times\!10^{-4}$ & $\boldsymbol{49.4\downarrow}$\\
Sq.-oct. col. & \code{17}{1}{5} & $1.13\!\times\!10^{-3}$ & $6.90\!\times\!10^{-4}$ & $\boldsymbol{39.2\downarrow}$\\
Hyp. col. & \code{24}{8}{4} & $4.25\!\times\!10^{-2}$ & $3.13\!\times\!10^{-2}$ & $\boldsymbol{26.4\downarrow}$\\
Surface & \code{25}{1}{5} & $6.80\!\times\!10^{-5}$ & $3.53\!\times\!10^{-5}$ & $\boldsymbol{48.0\downarrow}$\\
Defect surf. & \code{25}{2}{5} & $1.88\!\times\!10^{-3}$ & $1.63\!\times\!10^{-3}$ & $\boldsymbol{13.4\downarrow}$\\
Hyp. surf. & \code{30}{8}{3} & $1.75\!\times\!10^{-2}$ & $1.67\!\times\!10^{-2}$ & $\boldsymbol{4.4\downarrow}$\\
Rect. surf. & \code{45}{1}{5} & $1.21\!\times\!10^{-4}$ & $7.92\!\times\!10^{-5}$ & $\boldsymbol{34.4\downarrow}$\\
Hyp. surf. & \code{60}{18}{3} & $4.55\!\times\!10^{-2}$ & $4.03\!\times\!10^{-2}$ & $\boldsymbol{11.5\downarrow}$\\
\midrule
\multicolumn{5}{c}{\textbf{Versus \PH{}: 8 circuits, 71.7\% avg. decrease}}\\
\midrule
Surface & \code{9}{1}{3} & $2.30\!\times\!10^{-2}$ & $2.59\!\times\!10^{-3}$ & $\boldsymbol{88.7\downarrow}$\\
Surface & \code{25}{1}{5} & $6.61\!\times\!10^{-3}$ & $9.10\!\times\!10^{-4}$ & $\boldsymbol{86.2\downarrow}$\\
Surface & \code{49}{1}{7} & $1.65\!\times\!10^{-2}$ & $1.74\!\times\!10^{-4}$ & $\boldsymbol{98.9\downarrow}$\\
Surface & \code{81}{1}{9} & $6.60\!\times\!10^{-3}$ & $2.48\!\times\!10^{-4}$ & $\boldsymbol{96.2\downarrow}$\\
Lifted prod. & \code{39}{3}{3} & $1.19\!\times\!10^{-1}$ & $9.02\!\times\!10^{-2}$ & $\boldsymbol{24.0\downarrow}$\\
Random Tanner & \code{54}{11}{4} & $1.24\!\times\!10^{0}$ & $4.67\!\times\!10^{-1}$ & $\boldsymbol{62.5\downarrow}$\\
Random Tanner & \code{60}{2}{6} & $2.88\!\times\!10^{-2}$ & $1.20\!\times\!10^{-2}$ & $\boldsymbol{58.4\downarrow}$\\
Random Tanner & \code{108}{18}{4} & $1.10\!\times\!10^{0}$ & $4.61\!\times\!10^{-1}$ & $\boldsymbol{58.2\downarrow}$\\
\bottomrule
\end{tabular}
\end{table}

\subsection{Experimental Setup}
\label{sec:setup}
\label{sec:hyperparameters}
\paragraph{Benchmarks and noise.}
\label{sec:noise-models}
We assess all eleven released \AS{} circuits~\cite{AlphaSyndrome2026}
and all eight \PH{} circuits~\cite{PropHunt2026}.
Every comparison uses the Brisbane noise model released by \AS{},
including comparisons with \PH{} circuits. We choose this model because
the active and idle error rates are derived from IBM Brisbane
calibration data~\cite{AlphaSyndrome2026}. The distinct rates represent
different fault costs for active and idle qubits.

Each comparison matches the code, noise, decoder, and memory duration.
The released model depolarizes active and idle ancillas after each CNOT
layer with probabilities $7.43267\times10^{-3}$ and
$5.24398\times10^{-3}$, respectively. Data errors arise by propagation.
Preparation, Hadamard gates, and readout are ideal, meaning that the
model assigns no faults to these operations.
The \AS{} suite uses one noisy extraction round with ideal boundary
measurements. The \PH{} suite retains its $d+1$ rounds and temporal
detectors. We report these two memory protocols separately.

We use Stim~\cite{Gidney2021stim}, MWPM for surface codes,
BP-OSD for colour codes (including hyperbolic colour), and BP-LSD
for \PH{} quantum LDPC codes~\cite{Higgott2022pymatching,Higgott2024stimbposd,Hillmann2025lsd}.
Decoder probabilities and connectivity are derived from each circuit.
Experiments run on a six-core Intel Core i5-12400.

\paragraph{Independent assessment.}
\label{sec:sig}
Final assessment uses fresh target-noise samples after search fixes
the output. We report whole-window $\LER=p_Z+p_X$ and its sampling
uncertainty. Fixed-shot assessments use Clopper--Pearson
intervals~\cite{ClopperPearson1934} with Bonferroni correction for
simultaneous comparisons~\cite{NISTBonferroni}. Ten \AS{} comparisons
use nominal two-standard-error intervals under adaptive stopping, with
approximate coverage. When intervals overlap, the LER ordering is
uncertain. Appendix~\ref{app:assessment} gives the interval construction
and shot budgets.

\subsection{RQ1: Schedule Effectiveness}
\label{sec:results}
\textbf{LER reductions average \meanRed\% versus \AS{} and 71.7\%
versus \PH{}.} For each suite we average
$100(1-L_{\mathrm{FS},i}/L_{\mathrm{base},i})$ equally over its circuits,
using unrounded LERs and including cases with overlapping intervals.
Table~\ref{tab:rq1-compact} reports all 19 comparisons.

Figure~\ref{fig:full-pipeline} shows how gate ordering changes fault
propagation in two Steane schedules. The same fault on ancilla $X_C$
produces a logical failure in the first policy candidate and a harmless
stabilizer after decoding in the delivered schedule.

All eleven \AS{} point estimates improve, by up to \peakRed\%.
Confidence bounds establish lower LER in seven \PH{} comparisons
(58.2\%--98.9\% estimated reductions). The lifted-product
\code{39}{3}{3} code, abbreviated LP39, has a 24.0\% point reduction
whose uncertainty interval includes no improvement.

RQ1 compares delivered schedules with unequal prior search effort.
The \AS{} cases use four 80-batch PPO runs each, with eight 160-batch
runs for the 30-qubit case. For the \PH{} suite, we select among fresh
and previously generated candidates using separate validation before
final assessment.
RQ2 next examines fresh searches with matched time caps.

\subsection{RQ2: Scalability Study}
\label{sec:scalability}
\label{sec:speed}
\paragraph{Learning progress and feedback cost.}
\label{sec:learning-curves}
\begin{figure}[tbp]
  \centering
  \includegraphics[width=\columnwidth]{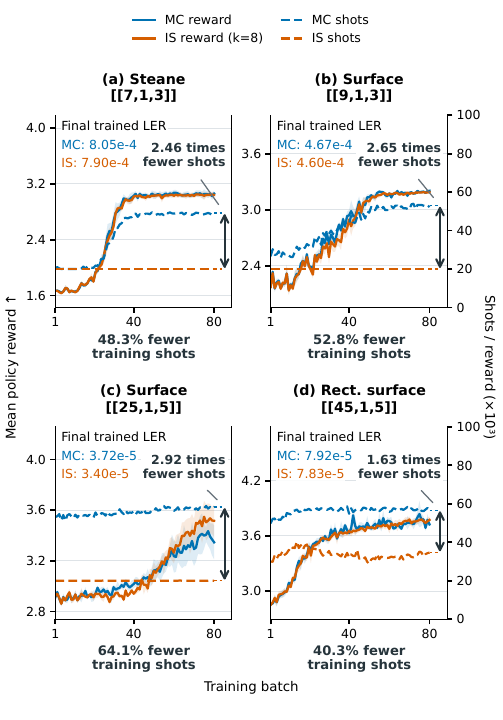}
  \caption{Learning and feedback cost under Brisbane noise.
    Solid curves show mean LER reward, with $\pm1$ standard error across
    four paired runs. Dashed curves show shots per schedule across both
    memories on a common scale. Arrows compare final-batch shots.
    Bold percentages give total training-shot savings.
    Appendix~\ref{app:feedback-cost} gives training and assessment budgets.}
  \Description{Four panels compare Monte Carlo and importance-sampled
    learning on Steane [[7,1,3]], surface [[9,1,3]], surface [[25,1,5]],
    and rectangular surface [[45,1,5]]. Each panel includes learning
    curves, dashed shot-cost curves, final trained LER annotations,
    and a double-headed arrow. The final-batch shot ratios are 2.46,
    2.65, 2.92, and 1.63; total training-shot savings are 48.3, 52.8,
    64.1, and 40.3 percent, respectively.}
  \label{fig:learning-curves}
\end{figure}

We extend the Steane \code{7}{1}{3} comparison to surface
\code{9}{1}{3}, \code{25}{1}{5}, and \code{45}{1}{5} circuits from
the \AS{} baseline suite. Each circuit uses four paired runs,
80 PPO batches, and fixed $k=8$ for IS. Both arms use identical
direct-MC candidate selection and omit local improvement to isolate
training feedback.

In Figure~\ref{fig:learning-curves}, importance sampling uses
48.3\%, 52.8\%, 64.1\%, and 40.3\% fewer total training shots,
respectively. The final-LER intervals overlap on all four circuits.
The comparison measures sampling cost; it does not match uncertainty
at every training evaluation or isolate wall-clock speedups.
These four cases span 7--45 data qubits. A 30-qubit hyperbolic case
in the artifact has no training-shot reduction under the same setting.

\paragraph{Increasing code distance.}
\label{sec:surface-distance}
\sysname{} has confirmed lower LER than \AS{} at every tested distance
from $d=3$ through $d=15$. The point reduction at $d=15$ is \textbf{97.8\%}.
We test rotated surface codes at $d=3,5,\ldots,15$ using one noisy
Brisbane round, ideal boundaries, and MWPM. Each search has one
evaluator worker and an elapsed-time cap of 10 minutes through $d=9$
and 30 minutes thereafter, including calibration and selection.
Each method has one search run per distance. The amplification factor was retuned
after the initial study.

Figure~\ref{fig:surface-distance} shows independent target-noise
assessments with 95\% simultaneous bounds covering the registered
circuits and all permitted stopping points. The dashed \PH{} curve
shows the starting circuits, as in Figure~\ref{fig:intro-contours};
these searches returned no optimized schedules within the time caps.
At $d=9,11$, the bounds
support at least 98.3\% and 98.9\% lower LER than \AS{}, respectively.
At $d=15$, \sysname{} has three failures and $\LER=3.623\times10^{-9}$.
The \AS{} circuit has 126 failures and $\LER=1.678\times10^{-7}$.
The LER ratio's upper bound, 0.999881, confirms the advantage's direction with
limited precision on its magnitude.

These bounds describe sampling uncertainty for the assessed circuits;
they do not measure variation across repeated searches.
Appendix~\ref{app:distance-assessment} gives all sample counts, stopping
rules, and assessment procedures.

\begin{figure}[tbp]
  \centering
  \includegraphics[width=\columnwidth]{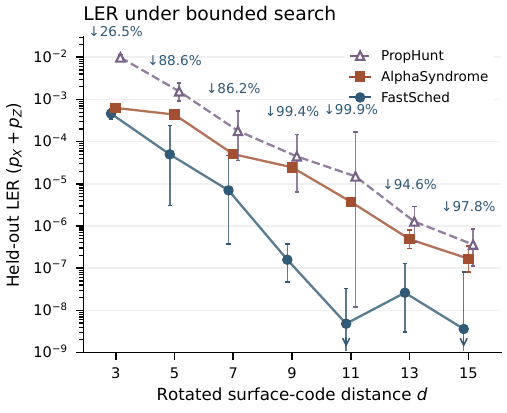}
  \caption{Independently assessed surface-code LERs: Brisbane
    ancilla noise, one noisy round, ideal boundaries, PyMatching.
    Search caps are 10 minutes through $d=9$ and 30 minutes thereafter.
    The dashed \PH{} curve shows starting circuits; its searches returned
    no optimized schedules within these caps.
    Whiskers show simultaneous 95\% bounds across all registered circuits
    and permitted assessment stopping points. Arrows at $10^{-9}$ mark lower bounds
    below the plot range. Down-arrow labels give percentage reductions,
    $100(1-\LER_{\mathrm{FastSched}}/\LER_{\mathrm{AlphaSyndrome}})$,
    from the point estimates. At $d=15$, the estimated reduction is
    97.8\%; the bounds establish the LER ordering.}
  \Description{FastSched, AlphaSyndrome, and PropHunt LERs at surface-code distances
    three through fifteen. FastSched is confirmed better through
    fifteen versus AlphaSyndrome. PropHunt is shown with a purple dashed line
    and hollow triangles; these are starting circuits from searches
    that returned no optimized schedules. Sampling gives positive estimates at distances nine
    and eleven, with 101 and 15 observed errors, respectively.
    Down-arrow labels are centered at distances three, five, seven,
    nine, eleven, thirteen, and fifteen and show 26.5, 88.6, 86.2, 99.4,
    99.9, 94.6, and 97.8 percent reductions, respectively.
    These reductions are computed from point estimates.
    At distance fifteen, three versus 126 errors give a 97.8
    percent point reduction and a narrowly resolved confidence comparison.
    FastSched at distance fifteen has 828.6 and 828.0 million
    shots in the two memories; AlphaSyndrome has 751 million per memory.
    Search caps are ten minutes through distance nine and
    thirty minutes thereafter.}
  \label{fig:surface-distance}
\end{figure}

\paragraph{Decreasing physical noise.}
Lower physical error rates make failures harder to observe during
search. We uniformly reduce both Brisbane probabilities and run fresh
searches at each noise level, since the best schedule can change with
noise strength. Figure~\ref{fig:scalability-wall} shows surface $d=5$
and LP39 alongside the random quasi-twisted code \code{54}{11}{4}
(RQT54), all at strengths $1/2$, $1/4$, $1/8$, $1/16$, and $1/32$.
Each circuit retains $d+1$ extraction rounds. \sysname{} and \AS{} receive two
search runs per setting with a 450-second ceiling, including setup
and selection, using one numerical thread on the same host.
Final evaluation uses independent samples and has no time cap.
For each method and setting, we display the lowest LER among assessed
outputs. Pale bars show one assessed run; other bars show the better
of two. Daggers mark ratios involving a pale bar. All search outputs
and assessment results are retained in the artifact.

\begin{figure}[t]
  \centering
  \includegraphics[width=\columnwidth]{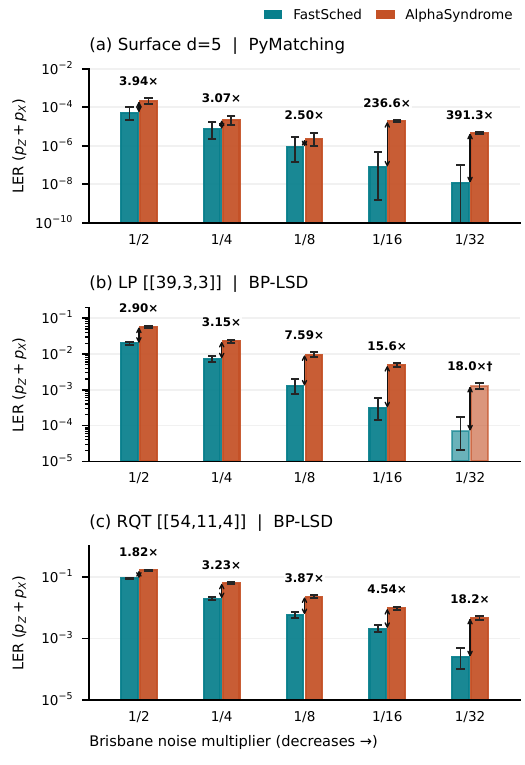}
  \caption{Low-noise scalability after 450-second searches.
    Bars show the lower assessed LER from two runs, or one run when pale.
    Daggers flag ratios involving pale bars. Arrow labels give
    \AS{}/\sysname{} LER ratios. Whiskers show simultaneous 95\% bounds
    from direct Monte Carlo.}
  \Description{Three stacked log-LER bar panels compare FastSched
    and AlphaSyndrome under decreasing Brisbane
    noise at one half, one quarter, one eighth, one sixteenth and one
    thirty-second strength. Double-headed arrows and factor annotations
    show the measured LER ratios. Pale bars denote one assessed run;
    other bars show the lower LER from two assessed runs. Daggers mark
    ratios involving a pale bar. All comparisons use direct sampling.}
  \label{fig:scalability-wall}
\end{figure}

\textbf{The largest measured LER advantages occur at lower physical noise.}
For surface $d=5$, the ratio of \AS{} LER to \sysname{} LER reaches
$236.6\times$ at $1/16$ strength and $391.3\times$ at $1/32$.
The corresponding \sysname{} LERs are $7.81\times10^{-8}$ and
$1.17\times10^{-8}$. Their wide bounds reflect five and three observed
failures, respectively.

Across the displayed noise range, LP39's ratio increases from
$2.90\times$ to $18.0\times$, and RQT54's ratio increases from
$1.82\times$ to $18.2\times$. These comparisons describe the best
assessed schedules from one or two runs per setting. Appendix~\ref{app:low-noise}
gives all measurement counts, sampling budgets, and assessment details.

\subsection{RQ3: Ablation Study}
\label{sec:ablation}
\label{sec:selection-eval}
Across eight circuits and four paired runs, candidate re-evaluation
gives the largest measured effect: \textbf{66.0\% lower mean normalized
LER} than omitting this stage. Larger shortlists give a 64.4\% reduction,
and importance-sampled feedback gives 8.1\%. Measurement and bootstrap
intervals support all three effects. With this short budget, the full
method averages 13.0\% above the released baseline LERs.
A separate 19-circuit study attributes 21.0\% of the final advantage to
local improvement. Appendix~\ref{app:ablation} gives the normalization,
all ten comparisons, and their uncertainty analyses.

\FloatBarrier
\section{Related Work}
\label{sec:related}
\paragraph{Fault-tolerant circuit design.}
Gate ordering controls how ancilla faults propagate to data qubits.
Beverland et al. formalize fault distance and hook faults for stabilizer
channels~\cite{Beverland2024}. Surface-code schedules use this connection
to preserve distance~\cite{Tomita_2014}. Kishony and Fowler orient hooks
diagonally~\cite{Kishony2025}, while Hirai et al. interleave $X$ and $Z$
checks for regular surface-code layouts~\cite{Hirai2025}.
Pacenti et al. address hook errors through decoder design~\cite{Pacenti2025}.

\paragraph{Automatic circuit synthesis.}
\AS{} and \PH{} demonstrate the value of searching for better extraction
circuits~\cite{AlphaSyndrome2026,PropHunt2026}.
\AS{} explores ordering and parallelism through MCTS guided by decoded
LER. \PH{} uses MaxSAT to find minimum-weight fault patterns that create
decoding ambiguities, then reorders or reschedules gates to resolve them.
\AS{} supports commuting stabilizers, while \PH{} targets CSS codes.
\sysname{} learns a policy that constructs schedules one check at a time.
Our importance-sampled evaluator supplies LER rewards for each complete
circuit.

\paragraph{Rare-event evaluation.}
Evaluating reliable circuits becomes expensive as logical failures grow
rare. Bravyi and Vargo use splitting to estimate rare logical-error
probabilities~\cite{BravyiVargo2013rare}. Mayer et al. extend splitting
to circuit-noise simulations~\cite{Mayer2025rare}.
ScaLER combines stratified fault injection with
extrapolation~\cite{YePalsberg2026}. Fail fast models the failure spectrum,
counts minimum-weight failing configurations, and uses splitting for
QLDPC codes~\cite{Beverland2025failfast}.
Our evaluator uses importance weights for each candidate circuit, so
rewards reflect the target noise even when failures are sampled under
amplified noise.

\paragraph{Learning with rare events.}
Rare-event sampling can also support learning.
Frank et al. oversample rare transitions and correct policy-value updates
with importance weights~\cite{Frank2008rare}. Corso et al. use RL to adapt
sampling distributions for rare-event estimation~\cite{Corso2022rare}.
In \sysname{}, the learned policy chooses gate orders. A separate
calibration chooses the amplification factor
(Section~\ref{sec:calibration}). This separation lets the agent learn
schedules from LER feedback at the target noise level.

\section{Conclusion}
\label{sec:conclusion}
RL with amplified-noise feedback improves
the quality and scalability of syndrome-extraction scheduling.
Across benchmark circuits, \sysname{} achieves average LER reductions
of \meanRed\% versus \AS{} and 71.7\% versus \PH{}
(Table~\ref{tab:rq1-compact}). With matched 30-minute
search limits, \sysname{} reaches a measured LER of $3.62\times10^{-9}$
at $d=15$, an estimated 97.8\% reduction versus \AS{}
(Section~\ref{sec:surface-distance}).

The key is to make rare logical failures useful for learning.
Our RL agent explores combinations of gate orders and learns which
choices yield lower LER. Noise amplification produces failures more
often, giving the agent useful feedback during search. Importance
weights convert these samples into LER estimates at the target noise
level.

These results support combining RL and importance sampling for
syndrome-extraction scheduling when direct failure samples are scarce.

\section{Artifact}
\label{sec:artifact}
The code, experimental results, and instructions for reproducing all
results reported in this paper are available in our anonymous artifact:
\url{https://anonymous.4open.science/r/FastSched-B5FE/}.

\paragraph{Generative AI disclosure.}
A large language model assisted with manuscript text and experiment
scripts. The authors produced and verified all results.

\clearpage

\phantomsection
\label{page:references-start}
\bibliographystyle{ACM-Reference-Format}
\bibliography{refs}

\clearpage
\appendix
\numberwithin{equation}{section}
\numberwithin{figure}{section}
\section{Supplementary Scheduling Details}
\label{app:scheduling}

The main paper explains the learning loop and the three-stage workflow.
This appendix records the action tables, PPO details, exploration
settings, and post-training selection procedure needed to reproduce
the search.

\subsection{Training Implementation Details}
\label{app:training-details}

\paragraph{Training limits.}
The training decision in Figure~\ref{fig:scheduling-workflow} checks
preset search budgets. Before each batch, the trainer checks the batch
limit, elapsed-time limit, and any configured total shot budget.
Another batch begins when these limits allow it. Otherwise, that run
ends. After all training runs finish, their saved pools pass to re-evaluation.
The standard \AS{} comparison allows 80 batches per run, with 30
candidate evaluations per batch. The extended hyperbolic search allows
160 batches per run. Appendix~\ref{app:selection} gives the run counts.
The scalability study uses the elapsed-time caps in
Section~\ref{sec:scalability}.

\paragraph{Action tables and network.}
Section~\ref{sec:mdp} defines the state and gate-order decisions.
The following construction fixes the available actions before training.

A table $\mathcal A_i$ lists the candidate CNOT orders for check $i$.
The actor selects an entry in this table.
For $w_i!\leq32$, the table contains every permutation in lexicographic
order. Larger checks use a table of 32 entries.
We sort the support, append each cyclic rotation followed by its
reversal, remove duplicates, keep at most 32 entries, and fill any remaining slots with
distinct random permutations. The table-generation seed is fixed at
zero, so every PPO run uses the same tables.

The cap concentrates training on a small set of actions.
A weight-six check has 720 possible orders, and a weight-eight check has
40,320. A 32-entry table lets a fixed training budget revisit each action
more often. We construct the table before any circuit evaluation.
Training evaluates the selected orders, and local improvement extends
the table with further permutations (Section~\ref{sec:selection}).

The neural network converts the state into action probabilities and a
value prediction. Its two shared, fully connected hidden layers each
contain 128 neurons. Each neuron applies the hyperbolic tangent
activation, $\tanh$, to its weighted input.

The actor head scores $\max_j|\mathcal A_j|$ entries and masks invalid
entries before sampling. The critic head predicts $V_\theta(s_i)$.
The parameter vector $\theta$ contains the weights and biases of the
shared layers and both heads.
In the Steane example, $X_B$ has 24 valid orders, and $a_2=11$ selects
$\mathcal A_2[11]=(3,7,1,5)$.
Recording this order in $H_B$ and advancing $c_i$ creates the state for $X_C$.

\paragraph{Credit assignment and policy updates.}
Equation~\eqref{eq:gae} propagates the final circuit's reward through
the recorded gate-order decisions.
The prediction error $\delta_i$ compares the observed reward and next
state's predicted value with the current prediction.
The backward recursion passes later prediction errors to earlier choices,
so PPO can favor actions associated with better outcomes.
For example, increasing only the terminal reward by $\Delta$ adds
$0.95^{m-i}\Delta$ to the advantage of decision $i$.

PPO controls the size of a policy update through the clipped objective
in Eq.~\eqref{eq:ppo-clip}.
Clipping replaces ratios below $0.8$ by $0.8$ and ratios above $1.2$
by $1.2$. Ratios inside the interval stay unchanged.
Angle brackets denote the mean over policy decisions in the batch.
A positive advantage favors increasing the chosen action's probability.
A negative advantage favors decreasing it.
Clipping caps the improvement credited by the objective: at ratio $1.2$
for a positive advantage and $0.8$ for a negative advantage.
Taking the smaller term also retains the penalty for changes that oppose
the advantage. This controls the incentive for large updates while
allowing several updates from one collected batch.

\paragraph{Evaluation overhead.}
\label{sec:cost}
We reuse circuits and decoder workers and deduplicate BP-OSD syndromes
to reduce evaluation overhead. Section~\ref{sec:speed} measures
reductions in sampled shots.

\subsection{Coordinated Exploration}
\label{app:exploration}
Gate orders can work well as a pattern shared across several checks.
The policy can learn such dependencies from its history input.
To expose useful patterns early, we also construct candidates with
\emph{shared-index exploration}. We draw one table index for each group
of checks with the same Pauli type and weight and apply that index to
every check in the group.

For example, the relative order $(3,4,1,2)$ gives Steane orders
$(3,4,1,2)$ for $X_A$, $(5,7,1,3)$ for $X_B$, and $(5,6,3,4)$ for $X_C$.
All three checks visit the third, fourth, first, and second qubits of
their sorted supports. Independent uniform draws choose a shared relative
order for these three checks with probability $1/24^2$.
Shared-index exploration tests such patterns directly.
Complete permutation tables and the rotation/reversal entries use
consistent relative orders. Random entries in capped tables are
specific to each check.

In the standard 30-candidate batch, 22 candidates come from the actor
and eight come from auxiliary exploration.
For each auxiliary candidate, we choose shared-index exploration or
independent uniform actions with equal probability.
Both kinds of auxiliary candidates enter the pool and update visit counts.
The GAE and PPO batch contains the 22 actor-generated trajectories.
Their recorded action probabilities supply the denominator of $\rho_i$.
Auxiliary candidates can become the final output through selection and
can influence later exploration bonuses through visit counts. Section~\ref{sec:ablation} measures the effects of these strategies.

\subsection{Post-Training Selection}
\label{app:selection}

Training produces a distribution over schedules and a pool of promising
circuits. Deployment requires one fixed circuit.
We use fresh samples to compare the saved candidates more precisely, then
fine-tune selected candidates through local changes.
Training spreads its budget across thousands of candidates.
Selection concentrates larger sample budgets on a small shortlist.
The shortlist size, local-search limits, and samples per evaluation
together control the additional compilation cost.
With local improvement enabled, the compiler compares the final schedule
from each local search and returns the one with the lowest LER estimate.
Disabling local improvement returns the best re-evaluated candidate.
Syndrome extraction repeatedly executes that fixed schedule.
The actor and critic remain fixed during selection.

\paragraph{Candidate re-evaluation.}
We form the shortlist from the saved training results.
Each run records a distinct schedule's first LER estimate and keeps at
most 300 candidates, ranked by that estimate. We merge the run-level
pools and deduplicate schedules. If several runs retained the same
schedule, its lowest stored LER determines its initial rank.
We then retain the 30 best-ranked candidates for fresh evaluation.

For the \AS{} suite, our configuration uses four independent training
runs, 80 batches per run, and 30 candidate evaluations per batch.
The product $4\times80\times30=9600$ counts evaluations from both policy
and auxiliary exploration, including repeated schedules.
The four run-level pools contain at most 1200 entries before
merging. We re-evaluate the top 30, retain the five with lowest fresh LER,
and evaluate those five again with larger budgets.
Thus, two re-evaluation passes use 35 candidate evaluations after training.
Fresh samples reduce the influence of favorable training estimates.
The smaller-budget component study in Section~\ref{sec:ablation} uses
24 candidates in the first pass and six in the second.
Its reduced-pool ablation retains two at each pass under the same
shot-budget ceilings. The extended hyperbolic RQ1 search uses eight runs
and 160 batches per run, with the same 30-to-five selection rule.

\paragraph{Local improvement.}
A \emph{starting schedule} is one of the highest-ranked circuits after
the second re-evaluation pass. Each selected schedule initializes a
separate local search. Multiple starting schedules let local improvement
explore different gate-order combinations.
A local trial changes one check's CNOT order, holds
the other orders fixed, reschedules the gates, and evaluates the resulting
circuit. At the start of each round, we add up to 24 fresh permutations
to each incomplete action table. We screen unvisited one-check changes
and re-evaluate the three best with larger, independent samples.

If check $i$ has $K_i$ available orders, a round screens at most
$\sum_i(K_i-1)$ neighbors. For Steane, this is $6(24-1)=138$ neighbors,
compared with $24^6$ combinations in the full order space.
For the \AS{} suite, local improvement refines two starting schedules
for up to five rounds each. At most 1380 neighbor screens are therefore needed
for Steane, plus confirmation and any repeated close comparisons.
Separate sample limits bound the cost of each evaluation.
We enable this stage in the main pipeline and make it configurable so
ablations can measure its contribution. Each starting schedule has its
own current schedule and set of visited changes.

Section~\ref{sec:selection-eval} reports paired local improvement on all
19 baseline circuits and component ablations on eight circuits.

Figure~\ref{fig:smc}(a,b) illustrates the gate-order change considered
by a local trial. For the Steane check $X_A=X_1X_2X_3X_4$, replacing
$(1,2,3,4)$ with $(1,3,2,4)$ changes the displayed hook from $X_3X_4$
to $X_2X_4$. During local improvement, we hold the other five check
orders fixed, reschedule the complete circuit, and compare the resulting
LER estimates.

\paragraph{Accepting a local change.}
Local improvement should favor changes whose estimated benefit exceeds
the sampling fluctuations in the comparison. Our acceptance rule uses
one estimated standard error of the LER difference as a margin.
For $b\in\{Z,X\}$, let $\widehat p_b(\sigma)=f_b/N_b$.
For $N_Z,N_X>1$, we use the sample-variance estimate for the summed LER:
\begin{equation}
u(\sigma)^2=
\frac{\widehat p_Z(\sigma)(1-\widehat p_Z(\sigma))}{N_Z-1}
+\frac{\widehat p_X(\sigma)(1-\widehat p_X(\sigma))}{N_X-1}.
\label{eq:ler-uncertainty}
\end{equation}
For the current schedule $\sigma_0$ and best re-evaluated neighbor
$\sigma'$, we accept a change when
\begin{equation}
\widehat{\LER}(\sigma_0)-\widehat{\LER}(\sigma')
>\sqrt{u(\sigma_0)^2+u(\sigma')^2}.
\label{eq:accept}
\end{equation}

For independent evaluations, the variance of a difference is the sum of
the two variances. Taking the square root gives the margin on the
right-hand side. A larger margin asks for a larger measured improvement
when either estimate is noisy.
For a positive improvement below the margin, we increase the budget and
re-evaluate both schedules up to preset limits.
After accepting a change, we evaluate the updated schedule again with
fresh samples. This new estimate becomes the current LER estimate for
the next round in Figure~\ref{fig:scheduling-workflow}.
The one-standard-error threshold is a search heuristic. Independent
assessment provides the evidence for the reported LER comparisons.

\paragraph{Stopping local improvement.}
The local-search decision in Figure~\ref{fig:scheduling-workflow} checks
that another round is allowed and untested one-check changes remain.
The \AS{} comparison permits five rounds per starting schedule.
Each round screens and confirms changes and accepts at most one.
A rejected change, the round limit, or exhausted untested changes ends
that local search. After every selected starting schedule has completed
its local search, the compiler ranks the refined schedules by their
latest LER estimates and fixes one output.
Selection and local improvement count toward compilation cost.
Independent assessment begins after the compiler has fixed its output
(Section~\ref{sec:sig}).

\subsection{Amplification Calibration}
\label{app:calibration}

Calibration chooses how strongly to amplify noise before training.
During search, the evaluator chooses the shots per candidate.
Figure~\ref{fig:evaluation-control} separates these decisions.

\begin{figure}[tbp]
\centering
\resizebox{\columnwidth}{!}{%
\begin{tikzpicture}[x=1cm,y=1cm,font=\small,
  block/.style={draw=black,rounded corners=2pt,fill=white,
    line width=.6pt,align=center,inner sep=4pt},
  flow/.style={-{Latex[length=2mm,width=1.3mm]},draw=black,
    line width=.65pt,shorten <=1pt,shorten >=1pt}]
\draw[black,rounded corners=3pt,line width=.6pt]
  (0,0) rectangle (8.2,-1.62);
\node[font=\small\bfseries] at (4.1,-.27) {Calibrate once};
\node[block,minimum width=2.05cm,minimum height=.82cm]
  (pilot) at (1.35,-1.03) {3 pilot\\schedules};
\node[block,minimum width=2.10cm,minimum height=.82cm]
  (score) at (4.1,-1.03) {Screen \& score\\$\ESS_{\mathrm{fail}}\ge8$};
\node[block,minimum width=2.18cm,minimum height=.82cm]
  (factor) at (6.8,-1.03) {\textbf{Fix $k$}\\Median choice};
\draw[flow] (pilot) -- (score);
\draw[flow] (score) -- (factor);

\draw[black,rounded corners=3pt,line width=.6pt]
  (0,-2.25) rectangle (8.2,-4.75);
\node[font=\small\bfseries] at (4.1,-2.53) {Evaluate each candidate};
\node[block,minimum width=2.05cm,minimum height=.85cm]
  (sample) at (1.35,-3.36) {Sample $Q_k$\\\& decode};
\node[block,minimum width=2.10cm,minimum height=.85cm]
  (weight) at (4.1,-3.36)
  {Weight failures\\$\widehat{\LER}_k,\ \ESS_{\mathrm{fail}}$};
\node[block,minimum width=2.18cm,minimum height=1.15cm]
  (stop) at (6.8,-3.36)
  {\textbf{Stop?}\\$\ESS_{\mathrm{fail}}\ge T$\\or $N\ge\Nmax$};
\draw[flow,dashed] (factor.south) -- (6.8,-1.92) -- (1.35,-1.92)
  -- (sample.north);
\node[fill=white,inner sep=2pt] at (4.1,-1.92) {fixed $k$};
\draw[flow] (sample) -- (weight);
\draw[flow] (weight) -- (stop);
\draw[flow] (stop.south) -- (6.8,-4.35) -- (1.35,-4.35)
  -- (sample.south);
\node[fill=white,inner sep=2pt] at (4.1,-4.35) {No: more shots};
\node[block,minimum width=2.8cm,minimum height=.64cm,
  font=\small\bfseries] (result) at (6.8,-5.28) {Target-noise LER};
\draw[flow] (stop.east) -- (8.62,-3.36) -- (8.62,-5.28)
  -- (result.east);
\node[anchor=west,inner sep=1pt] at (8.70,-4.35) {Yes};
\end{tikzpicture}}
\caption{Amplification calibration and adaptive shot allocation.}
\Description{A calibration group screens and scores three pilot schedules
using effective failures, then fixes k from the median choice. A dashed
arrow carries k to the candidate-evaluation group. Sampling, decoding,
and weighted failures produce a target-noise LER estimate and an effective
failure count. Evaluation repeats until the effective failure target or
shot cap is reached, then returns the LER. The target-noise decoder stays
fixed throughout.}
\label{fig:evaluation-control}
\end{figure}

Amplification trades more frequent failures against greater variation in
importance weights and increased decoding cost.
Our default calibration tests $k\in\{1,2,3,5,8,12\}$ on three random
schedules before PPO training. Each test uses 2,000 shots per memory
circuit. The pilot estimates LER, its standard error $\widehat s_k$,
and the decoding work $D_k$.

A useful pilot estimate draws information from several failed samples.
The \emph{effective sample size} (ESS) of the failure contributions,
$\ESS_{\mathrm{fail}}$, expresses this information as a count of equally
weighted failures. Dominant weights reduce the count
(Section~\ref{sec:comparison}). For each pilot schedule, we retain
settings with $\ESS_{\mathrm{fail}}\ge8$. This screening heuristic
requires several effective failures before comparing estimates.

For $m$ equally weighted failures, every share is $1/m$, so Eq.~\eqref{eq:ess}
returns $m$. For example, four equal contributions give 4.
Weights $(7,1,1,1)$ instead give $100/52\approx1.92$.
Although both examples contain four failures, the second estimate
depends heavily on a single shot and calls for more sampling.
At $k=1$, $\ESS_{\mathrm{fail}}$ equals the number of observed failures.

Among the retained settings, we choose the smallest score
\begin{equation}
S(k)
=D_k\left(\frac{\widehat s_k}{\widehat{\LER}_k}\right)^2,
\label{eq:k-score}
\end{equation}
where $D_k$ counts syndromes actually decoded, with a minimum of one.
BP-OSD reuses results for repeated syndromes, so only distinct rows count.
The squared ratio estimates relative variance. At a fixed cost per shot,
doubling the samples doubles work and approximately halves variance.
Their product estimates work at a common relative precision, up to a
shared factor. The decoded-row count makes calibration reproducible
across machines.

If every setting fails the screen, the pilot selects $k=1$.
We take the median of the three pilot selections and hold this value
fixed during PPO, candidate re-evaluation, and local improvement in the
default pipeline. A supplied $k$ bypasses calibration.
Direct-MC ablations explicitly set $k=1$.
The component study in Section~\ref{sec:ablation} uses direct MC for all
candidate re-evaluation and local improvement, which isolates the effect
of amplification during training.

\subsection{Weighted Uncertainty for Local Selection}
\label{app:weighted-uncertainty}

Section~\ref{sec:amplification} supplies weighted LER estimates to the
same scheduling workflow. The local acceptance margin uses the variance
of these weighted observations.

Local acceptance uses the uncertainty of the weighted LER.
Let $Y_{b,h}$ denote shot $h$'s weighted failure contribution for memory
$b$, as defined in Eq.~\eqref{eq:is}.
With $\bar Y_b=N_b^{-1}\sum_hY_{b,h}$ and $N_Z,N_X>1$, the variance estimate is
\begin{equation}
u_k(\sigma)^2=
\sum_{b\in\{Z,X\}}
\frac{\sum_{h=1}^{N_b}(Y_{b,h}-\bar Y_b)^2}{N_b(N_b-1)}.
\label{eq:is-uncertainty}
\end{equation}
At $k=1$, this expression reduces to Eq.~\eqref{eq:ler-uncertainty}.
Local improvement substitutes $\widehat{\LER}_k$ and $u_k$ into
Eq.~\eqref{eq:accept}. The comparison and re-evaluation steps follow
Section~\ref{sec:selection}.

For a fixed number $N$ of independent shots in one memory, the
estimator's variance is
\begin{equation}
\operatorname{Var}_{Q_k}(\widehat p_{L,k})
=\frac{\operatorname{Var}_{Q_k}(W_kF_\sigma)}{N}.
\end{equation}
Theorem~\ref{thm:unbiased} applies to this fixed-budget estimate.
Adaptive stopping uses the observed failure contributions and can
introduce bias, so final assessment uses independent target-noise MC.

\section{Supplementary Evaluation Details}
\label{app:evaluation}

Section~\ref{sec:eval} presents the main findings. This appendix retains
the assessment procedures, sample counts, and component configurations.
The artifact records the schedules, seeds, source hashes, and full logs.

\subsection{Independent Assessment}
\label{app:assessment}

Final assessment measures the whole-window $\LER=p_Z+p_X$ with fresh
target-noise samples after each search has fixed its output.
Confidence intervals quantify the remaining sampling uncertainty.
For fixed shot counts, we use Clopper--Pearson binomial
intervals~\cite{ClopperPearson1934}. Each interval bounds a memory's
failure probability from its observed failures and shots. Adding the
two lower endpoints and the two upper endpoints gives bounds on
$p_Z+p_X$. To bound a ratio, we divide a numerator's lower bound by
the denominator's upper bound, and conversely for the upper ratio bound.
Bonferroni correction divides the allowed error probability among all
intervals in a declared comparison set. The resulting bounds cover
the set simultaneously with at least the stated
confidence~\cite{NISTBonferroni}.

The extended 30-qubit \AS{} case uses five million shots per memory.
Two declared LER ratios each receive 97.5\% bounds, giving at least
95\% simultaneous coverage. The \PH{} holdouts use fixed budgets of
$10^4$--$5\times10^5$ shots per memory with simultaneous 95\% bounds.
Ten earlier \AS{} cases stop at 3000 failures or $6\times10^7$ shots
per memory. Those cases use nominal two-standard-error intervals,
whose coverage is approximate under adaptive stopping.
An unresolved comparison means the assessment leaves the LER ordering
uncertain. Configurations, counts, schedules, source hashes, and the
complete statistical protocols are in the artifact.

\paragraph{Learning and feedback cost.}
\label{app:feedback-cost}
Figure~\ref{fig:learning-curves} uses Brisbane noise, BP-OSD for Steane,
and PyMatching for surface codes. The plotted LER rewards exclude the
exploration bonus. Each training batch contains 30 episodes. Training
targets 30 observed failures for direct Monte Carlo and 30 effective
failures for importance sampling, with a cap of 30,000 shots per memory.
Final LERs use two million fresh Monte Carlo shots per memory per schedule.

\subsection{Surface-Code Distance Study}
\label{app:distance-assessment}

\paragraph{Distance overview in the Introduction.}
Figure~\ref{fig:intro-contours} shows point estimates from the same
Brisbane experiments as Figure~\ref{fig:surface-distance}, which also
shows confidence bounds. All three methods are assessed at
$d=3,5,\ldots,15$ with one noisy extraction round, ideal boundaries,
and PyMatching. The dashed \PH{} curve represents its starting circuits:
these searches returned no optimized schedules within the time caps.
Lines connect measurements; they do not interpolate an analytic error
model. LER is $p_X+p_Z$ for the two memory experiments.
At $d=15$, \sysname{} has measured LER $3.623\times10^{-9}$, an
estimated 97.8\% reduction versus our \AS{} run. Simultaneous 95\%
bounds establish the ordering, with limited precision on the reduction.

\paragraph{Measured schedules.}
\sysname{} has confirmed lower LER than \AS{} at all seven tested
distances through $d=15$, with a \textbf{97.8\% lower measured LER at $d=15$}.
We test rotated surface codes at
$d=3,5,\ldots,15$ (9--225 data qubits), using one noisy Brisbane round,
ideal boundaries, and MWPM. All methods run on the six-core Intel Core
i5-12400 with one evaluator worker per search and one-thread numerical
libraries. Each search has an elapsed-time cap of 10 minutes at
$d=3$--9 and 30 minutes at $d=11$--15. Calibration and candidate selection
count toward these caps.

Each distance has one search seed. The amplification factor for
\sysname{} was retuned after the initial study. The reported results extend
independent assessment of these saved circuits, without further search,
under a cumulative cap of $10^{10}$ shots per method and distance.
The figure combines the assessed circuits at all seven distances.
The confidence bounds retain
95\% simultaneous coverage over all 32 registered circuits, both memories,
and all 20,000-shot exposure-grid looks.
Figure~\ref{fig:surface-distance} reports the independently assessed LERs
and their uncertainty.

At $d=9$,
\sysname{} has 56 failures in 637.0 million $X$ shots and 45 failures
in 636.2 million $Z$ shots. Its $\LER=1.586\times10^{-7}$ is
99.4\% below the \AS{} point estimate. The simultaneous bounds support
at least \textbf{98.3\% lower LER}. At $d=11$, ten failures in
3.0978 billion $X$ shots and five in 3.1026 billion $Z$ shots give
$\LER=4.840\times10^{-9}$, a 99.9\% point reduction. The simultaneous
bounds support at least \textbf{98.9\% lower LER} than \AS{}.
The $d=9$ stopping target was 100 errors
and a resolved comparison. The $d=11$ target was amended to 15 errors
during assessment. In-flight samples are retained, and the same
simultaneous confidence procedure covers these stopping choices.

At $d=13$, \sysname{} has 21 failures with 800.2 million shots per
memory, giving $\LER=2.624\times10^{-8}$. \AS{} has 243 failures
with 500 million shots per memory, giving $4.86\times10^{-7}$.
This is a \textbf{94.6\% point reduction}, with at least
\textbf{56.3\% lower LER} under the simultaneous 95\% bounds.
At $d=15$, the stopping rule allowed assessment until the simultaneous
intervals separated in either direction or the cumulative shot caps
were reached. \sysname{} has zero failures in 828.6 million $X$ shots
and three in 828.0 million $Z$ shots. \AS{} has zero and 126 failures
with 751 million shots per memory. Their LERs are
$3.623\times10^{-9}$ and $1.678\times10^{-7}$, a
\textbf{97.8\% point reduction}. The simultaneous FastSched/\AS{}
ratio upper bound is 0.999881, confirming the direction of the advantage
with limited precision on its magnitude. All completed batches are included.

These results concern the selected circuits rather than
variation across independent search seeds.

\subsection{Decreasing Physical Noise}
\label{app:low-noise}

\paragraph{Measurements in Figure~\ref{fig:scalability-wall}.}
For surface $d=5$, the ratio of \AS{} LER to \sysname{} LER reaches
$236.6\times$ at $1/16$ and $391.3\times$ at $1/32$.
At these strengths, \sysname{} has direct estimates
$7.81\times10^{-8}$ (5 failures, 64 million samples per basis)
and $1.17\times10^{-8}$ (3 failures, 256 million samples per basis),
respectively, using PyMatching. The wide uncertainty bounds reflect
the small numbers of observed failures.

For LP39, the ratios at $1/2$, $1/4$, and $1/8$ are
$2.90\times$, $3.15\times$, and $7.59\times$.
At $1/16$, both runs are assessed for each method. The best observed
LERs are $0.00031$ and $0.004825$, a $15.6\times$ ratio
(62 versus 965 failures, 200,000 samples per basis, BP-LSD).
The single-run comparison gives $18.0\times$ at $1/32$. Here, direct LERs are
$7.00\times10^{-5}$ for \sysname{} and $1.26\times10^{-3}$ for
\AS{} (28 versus 503 failures, 400,000 samples per basis, BP-LSD).
For LP39, the measured LER advantage increases at each displayed
halving of the noise strength.

RQT54 provides a second QLDPC comparison across the same noise range.
At $1/2$, both paired runs favor \sysname{}. The best observed LERs are
$0.08899$ and $0.16152$, a $1.82\times$ ratio (8,899 versus 16,152
failures, 100,000 samples per basis, BP-LSD).
Two runs per method are assessed at all five strengths. The lowest-LER ratios
are $3.23\times$ at $1/4$, $3.87\times$ at $1/8$, and $4.54\times$
at $1/16$. At $1/8$, the selected LERs are $0.00584$ and $0.02261$
(584 versus 2,261 failures, 100,000 samples per basis, BP-LSD).
At $1/16$, the LERs are $0.002085$ and $0.00946$ (417 versus 1,892
failures, 200,000 samples per basis, BP-LSD). Both comparisons have
separated simultaneous bounds. The $1/32$ ratio is
$18.2\times$, with selected LERs $0.00025$ and $0.00454$
(50 versus 908 failures, 200,000 samples per basis, BP-LSD),
using the lower LER from two assessed runs for each method.
All five displayed strengths have at least one assessed output per
method. The artifact records every assessed run.

These comparisons describe the best assessed schedules from one or two
runs per setting.

\paragraph{Initial noise ladder.}
\begin{table}[tbp]
\centering\small
\setlength{\tabcolsep}{7pt}
\caption{Direct Monte Carlo comparisons at three Brisbane strengths.
Ratios are \AS{}/\sysname{} LER, using each method's lower LER from
two assessed runs. Larger ratios favor \sysname{}.
Bold ratios have separated simultaneous 95\% bounds under the
original 54-search correction (Appendix~\ref{app:low-noise}).}
\label{tab:low-noise-detail}
\begin{tabular}{@{}lrrr@{}}
\toprule
Brisbane multiplier & $1/2$ & $1/4$ & $1/8$\\
\midrule
\multicolumn{4}{l}{\textbf{\AS{}/\sysname{} LER ratio}}\\
Surface $d=3$ & 1.33 & \textbf{2.05} & 1.23\\
Surface $d=5$ & \textbf{3.94} & 3.07 & 2.50\\
LP39 & \textbf{2.90} & \textbf{3.15} & \textbf{7.59}\\
\bottomrule
\end{tabular}
\end{table}

Lower physical error rates make failures harder to observe during
search. To examine this setting, we run fresh searches on surface codes
at $d=3,5$ and on LP39. We multiply both Brisbane error probabilities
by $1/2$, $1/4$, or $1/8$, preserving their ratio and fault locations.
These multipliers define Brisbane strength. Every circuit retains
$d+1$ extraction rounds.

The \sysname{}--\AS{} comparison uses two attempts per method for each
code and noise level, giving 36 searches. The confidence correction
retains all 54 attempts in the original registration.
Each attempt has a 450-second elapsed-time ceiling on the six-core
Intel Core i5-12400, including setup and candidate selection. Numerical
libraries are limited to one thread. These budgets describe search
on this host.

We assess each eligible output with independent, fixed-shot direct MC.
Surface-code assessments use 1, 4, and 16 million shots per memory
at half, quarter, and eighth strength. LP39 uses 100,000 shots per
memory at each level. All 36 comparison searches have assessed outputs.
For each method, code, and noise level, we report the output with the
lowest measured LER across the two attempts.

Each interval bounds the whole-window LER of a fixed schedule.
The intervals have at least 95\% simultaneous coverage.
The correction reserves an error probability of
$0.05/54$ per registered attempt, including attempts with no output.
Keeping both attempts in this correction preserves simultaneous
coverage when we select the displayed result.

Table~\ref{tab:low-noise-detail} gives all nine \AS{}/\sysname{} LER
ratios. A ratio above one favors \sysname{}.
\sysname{} has lower point LER in all nine comparisons, with
\textbf{five resolved improvements}. LP39's ratio rises from
$2.90\times$ to $3.15\times$ to $7.59\times$ as noise falls.
The $d=3$ surface-code ratio first rises and then falls, while the $d=5$
ratio decreases across the three levels. The increasing advantage
therefore appears in LP39 within this experiment.

This study describes the best observed schedules from two attempts.
More independent search seeds would be needed to estimate typical
optimizer performance and establish how the advantage changes with noise.

\paragraph{Lower-noise extensions and independent validation.}
The extension adds 24 registered surface-$d=5$ attempts at strengths
$1/16$, $1/32$, $1/64$, and $1/128$, and 18 QLDPC attempts on LP39,
RQT60 \code{60}{2}{6}, and RQT54 at strengths $1/128$, $1/128$,
and $1/2048$, respectively. Each setting has two attempts per method.
Searches retain the 450-second ceiling; evaluation has no wall-clock cap.
Surface direct assessments use 64 million, 256 million, one billion,
and two billion samples per basis at the four strengths.
The remaining assessments outside the displayed noise range are archived.

The QLDPC study assesses each eligible saved schedule with 500,000
importance samples per basis at two fixed factors: $k=16,64$ for LP39
and RQT60, and $k=256,1024$ for RQT54. Both proposals must pass the
registered effective-failure and weight diagnostics, and their LERs
must agree within three combined standard errors. The primary factor
is reported, never the lower proposal estimate. All four LP39 schedules
pass. Direct controls use 100,000 samples per basis: \sysname{} versus
\AS{} has 0 versus 103 failures for one seed and 1 versus 65 for the
other. Both orderings have separated simultaneous bounds. Initial
RQT60 estimates fail precision checks, and RQT54's \AS{} cross-checks
also fail; these outcomes remain archived as unresolved.

A separate validation extension evaluates the same four saved RQT54
schedules with one million samples per basis at each of $k=256,512$,
plus one million direct samples per basis. All fixed samples are used.
All four schedules pass both importance checks. The primary LERs for
the two \sysname{} seeds are $1.24713\times10^{-7}$ and
$9.42343\times10^{-8}$, with 7,764 and 6,213 amplified failures;
the \AS{} estimates are $5.94538\times10^{-5}$ and
$4.81796\times10^{-5}$, with 24,159 and 25,238 amplified failures.
Primary relative standard errors are 2.2\% and 2.4\% for \sysname{}
and 1.7\% and 2.0\% for \AS{}. The second \sysname{} seed's
cross-factor discrepancy is 2.97 combined standard errors, within the
registered three-standard-error criterion. This marginal agreement
is retained explicitly. No samples from the earlier assessment are
pooled with the extension, and no selection is made between old and
new estimates. Four corresponding direct controls have 0 versus 62
and 0 versus 46 failures; they establish both LER orderings but cannot
certify the \sysname{} rare-event rate below $10^{-8}$.

The direct intervals in Figure~\ref{fig:scalability-wall} use
$\alpha=0.05/148$, covering the initial 54 attempts, the 24 surface
and 18 QLDPC attempts, four additional RQT54 direct controls, and
48 additional noise-ladder attempts.
This includes missing outputs and both search seeds. The separate
importance-sampling assessments above use approximate $\pm1.96$
standard-error intervals without a joint finite-sample coverage guarantee;
they are not plotted in Figure~\ref{fig:scalability-wall}. The plans, circuit
and candidate hashes, seeds, per-block sufficient statistics, and
both validation proposals are retained in the artifact. Independent
100,000-sample blocks use fixed seeds; all completed blocks are pooled
within the same circuit and factor. RQT60 validation was discontinued
before its planned sample count. Its samples are retained and do not
contribute a plotted estimate.

The additional ladder covers RQT54 at $1/2$, $1/4$, $1/8$, $1/16$, $1/32$, and
$1/64$, and LP39 at $1/16$ and $1/32$. Each setting registers two
fresh searches per method with the same 450-second ceiling and no
warm start across noise strengths. Direct evaluation uses 100,000
samples per basis for RQT54 at the first three levels, 200,000 for
RQT54 at $1/16$ and $1/32$ and LP39 at $1/16$, and 400,000 for RQT54 at $1/64$
and LP39 at $1/32$. These counts are fixed before evaluation and
shared by all methods at each setting. All completed samples are used.
The displayed figure is restricted to strengths $1/2$ through $1/32$
for all three codes; lower-strength measurements remain in the archive.
LP39 at $1/32$ has one assessed output per method. The other displayed settings
have two assessed outputs per method. Pale bars denote one assessed
output; other bars show the lower measured LER from two. Daggers flag
ratios involving a pale bar. The confidence family retains both
registered runs even when only one has an assessed output.

\subsection{Component and Local-Improvement Studies}
\label{app:ablation}

The ablation study measures how the stages in
Figure~\ref{fig:scheduling-workflow} and the evaluator in
Figure~\ref{fig:evaluation-control} affect the final schedule.
We first compare component changes on eight circuits with
matched sampling budgets. A separate experiment measures the gain from
local improvement across all 19 baseline circuits.

\paragraph{Component comparisons.}
The eight-circuit study includes six \AS{} circuits and two \PH{}
circuits. For each circuit, four random seeds pair the full method with
four training variants. The variants remove the exploration bonus,
remove coordinated exploration, use direct-MC feedback, or use a fixed
shot count per training evaluation. Five configurations across eight
circuits and four seeds give 160 searches. Each search has a total
training ceiling of 393,216 shots per memory.

Six further comparisons change the stages after training, reusing the
full method's saved candidates. These changes narrow the candidate
shortlist, omit candidate re-evaluation, omit local improvement, remove
the acceptance margin, reduce confirmation sampling together with the
margin, or disable action-table expansion. Together, the four training
changes and six later changes give ten component comparisons.

All final outputs receive independent fixed-shot assessment.

To combine codes with different error rates, we divide each output's
LER by its released baseline's LER. A normalized LER below one means
the output improves on that baseline. We average these ratios across
the four seeds within each circuit, then equally across the eight
circuits. Let $R_{\rm full}$ be this mean for the full method and
$R_{-j}$ the mean after removing component $j$. The relative LER
reduction from including that component is
$100(1-R_{\rm full}/R_{-j})\%$.

\textbf{Candidate re-evaluation has the largest measured effect,
reducing mean normalized LER by 66.0\%.}
Retaining more candidates gives a 64.4\% reduction: the full method
re-evaluates 24 candidates and then six, while the narrow variant
retains two at both stages. Both variants have the same total selection
shot ceilings. Importance-sampled training feedback gives an 8.1\%
reduction with direct-MC selection in both variants.

We examine uncertainty from two sources separately. Measurement
intervals account for the finite assessment shots. Seed-bootstrap
intervals resample the four paired search outcomes within each circuit
to estimate search variability. Each analysis corrects for the ten
component comparisons. Both analyses favor these three components.
The other seven effects remain unresolved across seeds.
The component effects overlap and should be interpreted separately.
At this short training budget, $R_{\rm full}=1.1303$: even the full
method averages 13.0\% above the released baseline LER. The percentages
above quantify improvements over the corresponding ablated variants.

\paragraph{Contribution of local improvement.}
The separate 19-circuit experiment measures how much of the final
baseline advantage comes from the last search stage. We assess the
schedule before and after local improvement with independent samples.
The equal-circuit mean reduction from the baseline increases from
33.7790\% before local improvement to 42.7320\% afterward.
The additional 8.9531 percentage points account for
\textbf{21.0\% of the final advantage}:
\begin{equation}
 C_{\rm local}
 =100\,\frac{\text{additional reduction}}{\text{final reduction}}\,\%
 =100\,\frac{8.9531}{42.7320}\,\%.
 \label{eq:ablation-partition}
\end{equation}
Figure~\ref{fig:rq3-pie} shows this division of the measured gain.
The local-improvement share has an approximate simultaneous 95\%
measurement interval of $[18.5,23.4]\%$. This interval accounts for
assessment noise with the searched schedules held fixed.

Local search budgets differ between suites. \AS{} cases allow
up to five rounds from two starts. \PH{} cases add one round with
at most 256 neighbors. Their separate contribution estimates are
70.9\% and $-0.4\%$, respectively. The extra pass on \PH{} cases has
an unresolved effect. The artifact retains all null and negative results.

\begin{figure}[tbp]
  \centering
  \includegraphics[width=\columnwidth]{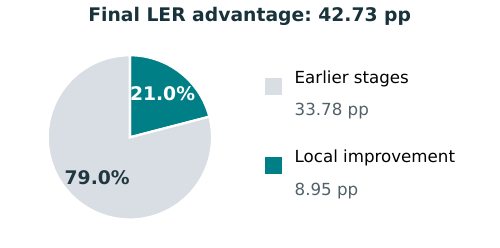}
  \caption{In a separate 19-circuit assessment, local improvement raises
    the average LER reduction from 33.78\% to 42.73\%, accounting for
    21.0\% of the final gain. The other 79.0\% precedes this stage.}
  \Description{A two-slice pie attributes 79 percent of the final
    baseline advantage to the endpoint before local improvement and 21 percent
    to the assessed local improvement pass.}
  \label{fig:rq3-pie}
\end{figure}

\clearpage
\end{document}